\definecolor{dgreen}{rgb}{0,.7,0}
\definecolor{dyellow}{rgb}{.7,.7,0}
\definecolor{dred}{rgb}{.7,0,0}
\definecolor{dblue}{rgb}{0,0,0.7}
\definecolor{dmag}{rgb}{.6,0,0.6}
\newcommand{\norm}[1]{\Vert #1 \Vert}
\newcommand{\grad}{\nabla}
\newcommand{\leadingzero}[1]{\ifnum #1<10 0\the#1\else\the#1\fi}
\newcommand{\todayIII}{\leadingzero{\day}/\leadingzero{\month}/\the\year}
\newcommand{\cT}{\mathcal{T}}
\newcommand{\alphaset}{\mathcal{A}}
\newcommand{\me}{\mathbb{E}}
\newcommand{\scal}[2]{\langle #1,\, #2 \rangle}
\newcommand{\softmax}{\mathrm{softmax} \,}
\newcommand{\avgg}[1][]{\widehat{\mathbf{g}_{#1}}}
\newcommand{\bx}{\mathbf{x}}
\newcommand{\be}{\mathbf{e}}
\newcommand{\balpha}{\mathbf{\alpha}}
\newcommand{\bz}{\mathbf{z}}
\newcommand{\by}{\mathbf{y}}
\newcommand{\bW}{\mathbf{W}}
\newcommand{\nll}{M}
\newcommand{\oR}{\mathbb{R}}
\newcommand{\linspan}{span\,}
\newcommand{\bu}{\mathbf{u}}
\newcommand{\bone}{\mathbf{1}}
\newcommand{\diag}{diag\,}
\newtheorem{proposition}{Proposition}
\newenvironment{proof}[1][Proof]{\begin{trivlist}
\item[\hskip \labelsep {\bfseries #1}]}{\end{trivlist}}
\begin{document}
%
\title{Discriminative Unsupervised Feature Learning with Exemplar Convolutional Neural Networks}
%
%

\author{Alexey~Dosovitskiy,
        Philipp~Fischer,
        Jost~Tobias~Springenberg,
        Martin~Riedmiller,
        Thomas~Brox
\IEEEcompsocitemizethanks{\IEEEcompsocthanksitem All authors are with the Computer Science Department\protect\\ at the University of Freiburg\protect\\
E-mail: \{dosovits, fischer, springj, riedmiller, brox\}@cs.uni-freiburg.de}
\thanks{}}

\IEEEtitleabstractindextext{%
\begin{abstract}
Deep convolutional networks have proven to be very successful in learning task specific features that allow for unprecedented performance on various computer vision tasks.
Training of such networks follows mostly the supervised learning paradigm, where sufficiently many input-output pairs are required for training.
Acquisition of large training sets is one of the key challenges, when approaching a new task.
In this paper, we aim for generic feature learning and present an approach for training a convolutional
network using only unlabeled data.
To this end, we train the network to discriminate between a set of surrogate classes.
Each surrogate class is formed by applying a variety of transformations to
a randomly sampled 'seed' image patch.
In contrast to supervised network training, the resulting feature representation is not class specific.
It rather provides robustness to the transformations that have been applied during training.
This generic feature representation allows for classification results that outperform the
state of the art for unsupervised learning on several popular datasets (STL-10, CIFAR-10, Caltech-101, Caltech-256). While such generic features cannot compete with class specific features from supervised training on a classification task, we show that they are advantageous on geometric matching problems, where they also outperform the SIFT descriptor.
\end{abstract}
}

\maketitle

\IEEEdisplaynontitleabstractindextext

%
\IEEEpeerreviewmaketitle

\IEEEraisesectionheading{\section{Introduction}}


In the recent two years Convolutional Neural Networks (CNNs) trained in a supervised manner via backpropagation dramatically improved the state of the art performance on a variety of Computer Vision tasks, such as image classification~\citep{Krizhevsky_NIPS2012, Zeiler_ECCV2014, Donahue_ICML2014,Razavian_CVPR2014},  detection~\citep{Girshick_CVPR2014, Sermanet_ICLR2014}, semantic segmentation~\cite{Hariharan_CVPR2015, Long_CVPR2015}. Interestingly, the features learned by such networks often generalize to new datasets: for example, the feature representation of a network trained for classification on ImageNet~\cite{imagenet} also performs well on PASCAL VOC \cite{pascal}. Moreover, a network can be adapted to a new task by replacing the loss function and possibly the last few layers of the network and 
\emph{fine-tuning} it to the new problem, i.e. adjusting the weights using backpropagation. With this approach, typically much smaller training sets are sufficient. 

Despite the big success of this approach, it has at least two potential drawbacks. First, there is the need for huge labeled datasets to be used for the initial supervised training. These are difficult to collect, and there are diminishing returns of making the dataset larger and larger. Hence, unsupervised feature learning, which has quick access to arbitrary amounts of data, is conceptually of large interest despite its limited performance so far. Second, although the CNNs trained for classification generalize well to similar tasks, such as object class detection, semantic segmentation, or image retrieval, the transfer becomes less efficient the more the new task differs from the original training task. In particular, object class annotation may not be beneficial to learn features for class-independent tasks, such as descriptor matching.

In this work, we propose a procedure for training a CNN that does not rely on any labeled data but rather makes use of a surrogate task automatically generated from unlabeled images. The surrogate task is designed to yield generic features that are descriptive and robust to typical variations in the data. The variation is simulated by randomly applying transformations to a 'seed' image. This image and its transformed versions constitute a surrogate class. In contrast to previous data augmentation approaches, only a single seeding sample is needed to build such a class. Consequently, we call thus trained networks \emph{Exemplar-CNN}.

By construction, the representation learned by the Exemplar-CNN is discriminative, while also invariant to some typical transformations. These properties make it useful for various vision tasks. We show that the  feature representation learned by the Exemplar-CNN performs well on two very different tasks: object classification and descriptor matching. The classification accuracy obtained with the Exemplar-CNN representation exceeds that of all previous unsupervised methods on four benchmark datasets: STL-10, CIFAR-10, Caltech-101, Caltech-256. On descriptor matching, we show that the feature representation outperforms the representation of the AlexNet~\cite{Krizhevsky_NIPS2012}, which was trained in a supervised, class-specific manner on ImageNet. Moreover, it outperforms the popular SIFT descriptor. 

\begin{figure*}
\centering
\begin{minipage}[t]{.5\textwidth}
  \centering
  \includegraphics[width=.95\linewidth]{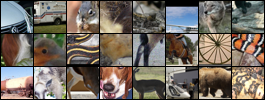}
  \begin{minipage}[t]{.9\linewidth}
  \vskip -0.1in
  \caption{Exemplary patches sampled from the STL unlabeled dataset which are
    later augmented by various transformations to obtain surrogate data
    for the CNN training.}
  \label{fig:sample_patches}
  \end{minipage}
\end{minipage}%
\begin{minipage}[t]{.5\textwidth}
  \centering
  \includegraphics[width=.95\linewidth]{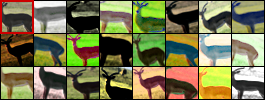}
  \begin{minipage}[t]{.9\linewidth}
  \vskip -0.1in
  \caption{Several random transformations applied to one of the patches
    extracted from the STL unlabeled dataset. The original ('seed') patch is in the
    top left corner.}
  \label{fig:sample_transformations}
  \end{minipage}
\end{minipage}
  \vskip -0.1in
\end{figure*}

\subsection{Related Work}
Our approach is related to a large body of work on unsupervised
learning of invariant features and training of convolutional neural networks.

Convolutional training is commonly used in both
supervised and unsupervised methods to utilize the invariance of image statistics to
translations
\citep{LeCun_NC1989,Kavukcuoglu_NIPS2010,Krizhevsky_NIPS2012}. Similar
to our approach, most successful methods employing
convolutional neural networks for object recognition rely on data
augmentation to generate additional training samples for their
classification objective \citep{Krizhevsky_NIPS2012,Zeiler_ECCV2014}. While
we share the architecture (a convolutional neural network) with
these approaches, our method does not rely on any labeled training
data.

In unsupervised learning, several studies on learning invariant
representations exist. Denoising autoencoders
\citep{Vincent_ICML2008}, for example, learn features that are robust
to noise by trying to reconstruct data from randomly perturbed input
samples.
Zou et al. \citep{Zou_NIPS2012} learn invariant features
from video by enforcing a temporal slowness constraint on the feature
representation learned by a linear autoencoder.
Sohn et al. \citep{Sohn_ICML2012} and Hui et al. \citep{Hui_ICML2013} learn features
invariant to local image transformations. In contrast to our
discriminative approach, all these methods rely on directly modeling
the input distribution and are typically hard to use for jointly
training multiple layers of a CNN.

The idea of learning features that are invariant to transformations
has also been explored for supervised training of neural
networks. The research most similar to ours is early work on tangent
propagation \citep{Simard_NIPS1992} (and the related double
backpropagation \citep{Drucker_TNN1992}) which aims to learn invariance to
small predefined transformations in a neural network by directly penalizing the derivative
of the output with respect to the magnitude of the
transformations. In contrast, our algorithm does not regularize the derivative explicitly. Thus it is less sensitive to the magnitude of the applied transformation.

This work is also loosely related to the use of unlabeled data for
regularizing supervised algorithms, for
example self-training~\citep{Amini_ECAI2002} or entropy
regularization~\citep{Grandvalet_SSL2006}. In
contrast to these semi-supervised methods, Exemplar-CNN training does not require any labeled data.


Finally, the idea of creating an auxiliary task in order to learn a good data representation was used in~\citep{Ahmed_ECCV2008, Collobert_JMLR2011}.


\section{Creating Surrogate Training Data}

The input to the proposed training procedure is a set of unlabeled images, which come
from roughly the same distribution as the images in which we later aim to
compute the learned features. We randomly sample $N$ patches of size $32
\times 32$ pixels from different images at varying positions and
scales forming the initial training set $X = \{\bx_1, \ldots \bx_N\}$. We are interested in patches containing objects or parts of objects, hence we sample only from regions containing considerable gradients. More precisely, we sample a patch with probability proportional to mean squared gradient magnitude within the patch. Exemplary patches sampled from the STL-10 unlabeled
dataset are shown in Fig.~\ref{fig:sample_patches}.

We define a family of transformations $\{T_\balpha|\, \balpha \in \alphaset\}$ parameterized by
vectors $\balpha \in \alphaset$, where $\alphaset$ is the set of all possible parameter vectors. Each transformation $T_\balpha$ is a \emph{composition} of elementary transformations. To learn features for the purpose of object classification, we used transformations from the following list:
\begin{compactitem}
 \item translation: vertical and horizontal translation by a distance within $0.2$ of the patch size;
 \item scaling: multiplication of the patch scale by a factor between $0.7$ and $1.4$;
 \item rotation: rotation of the image by an angle up to $20$ degrees;
 \item contrast 1: multiply the projection of each patch pixel onto the principal components of the set of all pixels by a factor between $0.5$ and $2$ (factors are independent for each principal component and the same for all pixels within a patch);
 \item contrast 2: raise saturation and value (S and V components of the HSV color representation) of all pixels to a power between $0.25$ and $4$ (same for all pixels within a patch), multiply these values by a factor between $0.7$ and $1.4$, add to them a value between $-0.1$ and $0.1$;
 \item color: add a value between $-0.1$ and $0.1$ to the hue (H
   component of the HSV color representation) of all pixels in the
   patch (the same value is used for all pixels within a patch).
\end{compactitem}
The approach is flexible with regard to extending this list by other transformations in order to serve other applications of the learned features better. For instance, in Section~\ref{sect:matching} we show that descriptor matching benefits from adding a blur transformation.

All numerical parameters of elementary transformations, when concatenated
together, form a single parameter vector $\alpha$.
For each initial patch $\bx_i \in X$ we sample $K$
random parameter vectors $\{\balpha_i^1, \ldots, \balpha_i^K\}$  and
apply the corresponding transformations $\cT_i = \{ T_{\balpha_i^1},
\ldots, T_{\balpha_i^K}\}$ to the patch $\bx_i$. This yields the set
of its transformed versions $S_{\bx_i} = \cT_i \bx_i = \{T \bx_i|\, T
\in \cT_i\}$. An example of such a set is shown in Fig.~\ref{fig:sample_transformations}~. Afterwards we subtract the mean of each pixel over the whole resulting dataset. We do not apply any other preprocessing.

\section{Learning Algorithm}

Given the sets of transformed image patches, we declare each
of these sets to be a class by assigning label $i$ to the class
$S_{x_i}$. We train a CNN to discriminate
between these surrogate classes. Formally, we minimize the following
loss function:
\begin{equation}
\label{eq:objective}
 L(X) = \sum\limits_{\bx_i \in X} \sum\limits_{T \in \cT_i} l(i,\, T \bx_i),
\end{equation}
where $l(i,\, T \bx_i)$ is the loss on the transformed sample
$T \bx_i$ with (surrogate) true label $i$. We use a
CNN with a fully connected classification layer and a softmax output layer and we optimize the multinomial negative log
likelihood of the network output, hence in our case
\begin{align}
\label{eq:objective_details}
  \begin{split}
 l(i,\, T \mathbf{x}_i) &= \nll(\be_i, f(T \mathbf{x}_i)), \\
 \nll(\by, \mathbf{f}) &= -\scal{\by}{\log \mathbf{f}} = - \sum\limits_{k} y_k \log f_k,
  \end{split}
\end{align}
where $f(\cdot)$ denotes the function computing the values of the output
layer of the CNN given the input data, and $\mathbf{e}_i$ is the
$i$th standard basis vector. We note that in the limit of an infinite
number of transformations per surrogate class, the objective function~\eqref{eq:objective} takes the form
\begin{equation}
\label{eq:formal_objective}
 \widehat{L} (X) = \sum\limits_{\mathbf{x}_i \in X} \me_\balpha [l(i,\, T_\balpha \mathbf{x}_i)],
\end{equation}
which we shall analyze in the next section.

Intuitively, the classification problem described above serves to ensure that different input samples can be distinguished. At the same time, it enforces invariance to the specified transformations. In the following
sections we provide a foundation for this intuition. We first
present a formal analysis of the objective, separating it into a well
defined classification problem and a regularizer that enforces
invariance (resembling the analysis in~\citep{Wagner_NIPS2013}). We then discuss the derived properties of this classification problem and compare it to common practices for unsupervised feature learning.

\subsection{Formal Analysis}

We denote by $\balpha \in \alphaset$ the random vector of transformation parameters, by $g(\bx)$ the vector
of activations of the second-to-last layer of the network when
presented the input patch $\bx$, by $\bW$ the matrix of the
weights of the last network layer, by $h(\bx) = \bW g(\bx)$ the last layer activations before applying the softmax,
and by $f(\bx) = \softmax(h(\bx))$ the output of the network. By plugging in the definition of the softmax activation function
\begin{equation}
 \softmax(\bz) = \exp(\bz) / \norm{\exp(\bz)}_1
\end{equation}
the objective function~\eqref{eq:formal_objective} with loss~\eqref{eq:objective_details} takes the form
\begin{equation}
\sum\limits_{\bx_i \in X} \me_\balpha \bigl[
-\scal{\be_i}{h(T_\balpha \bx_i)} + \log \norm{\exp
  (h(T_\balpha \bx_i))}_1 \bigr].
\label{eq:objective-with-softmax}
\end{equation}
With $\avgg[i] = \me_\balpha \left[ g(T_\balpha \bx_i) \right]$ being the average feature representation of transformed versions of the image patch $\bx_i$
we can rewrite Eq. \eqref{eq:objective-with-softmax} as
\begin{align}
  \begin{split}
&\sum\limits_{\bx_i \in X} \bigl[- \scal{\be_i}{\bW \avgg[i]} + \log \norm{\exp (\bW \avgg[i])}_1\bigr]\\
+ &\sum\limits_{\bx_i \in X} \bigl[ \me_\balpha \left[\log \norm{\exp
    (h(T_\balpha \bx_i))}_1 \right] - \log \norm{\exp (\bW \avgg[i])}_1
\bigr].
\end{split}
\label{eq:objective-rewritten}
\end{align}
The first sum is the objective function of a multinomial logistic regression
problem with input-target pairs $(\avgg[i],\, \be_i)$. This objective falls back to the
transformation-free instance classification problem $\overline{L}(X) = \sum_{\bx_i \in X} l(i,\, \bx_i)$
if $g(\bx_i) = \me_\balpha[g(T_{\balpha} \bx)]$. In general, this
equality does not hold and thus the first sum enforces correct
classification of the average representation $\me_\balpha[g(T_{\balpha} \bx_i)]$ for a given input
sample.
For a truly invariant representation, however, the equality is achieved.
Similarly, if we suppose that $T_\balpha \bx = \bx$ for $\balpha = 0$, that for small
values of~$\balpha$ the feature representation $g(T_\balpha \bx_i)$ is approximately linear
with respect to $\balpha$ and that the random
variable $\balpha$ is centered, i.e. $\me_\balpha \left[ \balpha \right]
= 0$, then $\avgg[i] = \me_\balpha \left[ g(T_\balpha \bx_i)
\right] \approx \me_\balpha \left[g(\bx_i) + \left.\grad_\balpha
    (g(T_\balpha \bx_i))\right|_{\balpha=0} \, \balpha \right] =
g(\bx_i)$.

The second sum in Eq.~\eqref{eq:objective-rewritten} can be seen as a regularizer enforcing all $h(T_\balpha
\mathbf{x}_i)$ to be close to their average value, i.e., the feature representation is sought to be approximately invariant to the
transformations $T_\balpha$. To show this we use the convexity of the
function $\log \norm{ \exp(\cdot)}_1$ and
Jensen's inequality, which yields (proof in Appendix~\ref{appendix:formal}):
\begin{equation}
 \me_\balpha \left[ \log \norm{\exp (h(T_\balpha \mathbf{x}_i))}_1 \right] - \log
 \norm{\exp (\mathbf{W} \avgg[i])}_1 \geq 0 .
 \label{eq:jensens}
\end{equation}
If the feature representation is perfectly invariant, then
$h(T_\balpha \bx_i) = \bW \avgg[i]$ and inequality~\eqref{eq:jensens}
turns to equality, meaning that the regularizer reaches its global
minimum.

\subsection{Conceptual Comparison to Previous Unsupervised Learning Methods}
\label{sect:properties}

Suppose we want to unsupervisedly learn a feature representation useful for a recognition task, for example classification. The mapping from input images $\bx$ to a feature
representation $g(\bx)$ should then satisfy two requirements: (1) there must be at least one feature
that is similar for images of the same category $\by$ (invariance); (2) there must be at least one feature
that is sufficiently different for images of different categories (ability to discriminate).

Most unsupervised feature learning methods aim to learn such a
representation by modeling the input distribution
$p(\mathbf{x})$. This is based on the assumption that a good model of
$p(\mathbf{x})$ contains information about the category distribution $p(\mathbf{y}|\mathbf{x})$.
That is, if a representation is learned, from which a
given sample can be reconstructed perfectly, then the representation is
expected to also encode information about the category of the sample (ability to discriminate).
Additionally, the learned representation should be invariant to variations in the
samples that are irrelevant for the classification
task, i.e., it should adhere to the manifold hypothesis (see
e.g. \citep{Rifai_NIPS2011} for a recent discussion).
Invariance is classically achieved by regularization of the
latent representation, e.g., by enforcing sparsity \citep{Kavukcuoglu_NIPS2010}
or robustness to noise \citep{Vincent_ICML2008}.


In contrast, the discriminative objective in Eq.~\eqref{eq:objective}
does not directly model the input distribution $p(\mathbf{x})$ but learns a
representation that discriminates between input samples. The representation
is not required to reconstruct the input, which is unnecessary in a
recognition or matching task. This leaves more degrees of freedom to model the desired
variability of a sample. As shown in our analysis
(see Eq. \eqref{eq:jensens}), we enforce invariance to transformations applied during surrogate data creation by requiring the
representation $g(T_{\balpha}\bx_i)$ of the transformed image patch to be predictive
of the surrogate label assigned to the original image patch $\bx_i$.

It should be noted that this approach assumes that the transformations $T_{\balpha}$ do not change the identity of the image content. For example, if we use a color transformation we will force the network to be invariant to this change and cannot expect the extracted features to perform well in a task relying on color information (such as
differentiating black panthers from pumas)\footnote{Such cases could be covered either by careful selection of applied
transformations or by combining features from multiple networks trained
with different sets of transformations and letting the final (supervised)
classifier choose which features to use.}.

\section{Experiments: Classification}
To compare our discriminative approach to previous unsupervised feature learning methods, we report classification results on the STL-10~\citep{Coates_AISTATS2010}, CIFAR-10~\citep{Krizhevsky_thesis2009},
Caltech-101~\citep{FeiFei_CVPR2004} and Caltech-256~\citep{Griffin_2007} datasets.

\subsection{Experimental Setup}
The datasets we tested on differ in the number of classes ($10$ for CIFAR and STL, $101$ for Caltech-101, $256$ for Caltech-256)
and the number of samples per class. STL is especially well suited
for unsupervised learning as it contains a large set of $100,\!000$
unlabeled samples. In all experiments, except for the dataset transfer experiment,
we extracted surrogate training data from the unlabeled subset of STL-10. When testing on CIFAR-10, we resized the images from $32 \times 32$ pixels
to $64 \times 64$ pixels to make the scale of depicted objects more similar to the other datasets. Caltech-101 images were resized to $150 \times 150$ pixels and Caltech-256 images to $256 \times 256$ pixels (Caltech-256 images have on average higher resolution than Caltech-101 images, so not downsampling them so much allows to preserve more fine details).

We worked with three network architectures. A smaller network was used to evaluate
the influence of different components of the augmentation procedure on
classification performance. It consists of two convolutional layers
with $64$ filters each, followed by a fully connected layer with $128$
units. This last layer is succeeded by a softmax
layer, which serves as the network output. This network will be referred to as 64c5-64c5-128f as explained in Appendix~\ref{appendix:network_architecture}.

To compare our method to the
state-of-the-art we trained two bigger networks: a network that consists of three convolutional layers with $64$,
$128$ and $256$ filters respectively followed by a fully connected
layer with $512$ units (64c5-128c5-256c5-512f), and an even larger network, consisting of three convolutional layers with $92$,
$256$ and $512$ filters respectively and a fully connected
layer with $1024$ units (92c5-256c5-512c5-1024f).

In all these models all convolutional filters are
connected to a $5 \times 5$ region of their input. $2 \times 2$
max-pooling was performed after the first and second convolutional
layers. Dropout~\cite{Hinton_arxiv2012, Srivastava_JMLR2014} was applied to the fully connected
layers. We trained the networks using an implementation based on \emph{Caffe}~\cite{caffe}.
Details on the training procedure and hyperparameter settings are provided in Appendix~\ref{appendix:training_details}.

\addtocounter{footnote}{-1}
\setlength{\tabcolsep}{2pt}
\begin{table*}
  \caption[]{Classification accuracies on several datasets
    (in percent).
  $*$ Average per-class accuracy\protect \footnotemark~ $78.0 \% \pm 0.4
    \%$.
    $\dagger$ Average per-class accuracy~ $85.0 \% \pm 0.7
    \%$. $\ddagger$ Average per-class accuracy $85.8 \% \pm 0.7
    \%$. } 
      \label{tbl:classification}
  \begin{minipage}{\textwidth}
  \vskip 0.1in
  \hskip -0.05in
  \centering
    \small{
      \begin{tabular}{l|c|c|c|c|c|c}
      \hline
                                                 Algorithm    &  STL-10                 &     CIFAR-10(400)       &     CIFAR-10        &      Caltech-101       & Caltech-256(30)& \#features \\ \hline
      Convolutional K-means Network~\cite{Coates_NIPS2011}    &  $60.1 \pm 1$           &  $70.7 \pm 0.7$         &  $82.0$             &        ---             & --- &8000            \\
      Multi-way local pooling~\cite{Boureau_ICCV2011}         &         ---             &          ---            &        ---          &      $77.3\pm0.6$      & $41.7$ &$1024 \times 64$ \\
      Slowness on videos~\cite{Zou_NIPS2012}                  &  $61.0$                 &       ---               &      ---            &      $74.6$            & --- &556 \\
      Hierarchical Matching Pursuit (HMP)~\cite{Bo_ISER2012}  &  $64.5 \pm 1$           &       ---               &      ---            &        ---             & --- &1000\\
      Multipath HMP~\cite{Bo_CVPR2013}                        &     ---                 &       ---               &      ---            &  $82.5 \pm 0.5$        & $50.7$ &5000 \\
      View-Invariant K-means~\cite{Hui_ICML2013}              &  $63.7$                 & $72.6 \pm 0.7$          &    $81.9$           &        ---             & --- &6400\\ \hline
      Exemplar-CNN (64c5-64c5-128f)                           &  $67.1 \pm 0.2$         & $69.7 \pm 0.3$          &    $76.5$           &  $79.8 \pm 0.5^*$      & $ 42.4 \pm 0.3 $&256 \\
      Exemplar-CNN (64c5-128c5-256c5-512f)                    & $72.8 \pm 0.4$          & $75.4 \pm 0.2$          & $82.2$              & $86.1 \pm 0.5^\dagger$ & $51.2 \pm 0.2$ &960 \\
      Exemplar-CNN (92c5-256c5-512c5-1024f)                   & $\mathbf{74.2 \pm 0.4}$ & $\mathbf{76.6 \pm 0.2}$ & $\mathbf{84.3}$     & $\mathbf{87.1 \pm 0.7}^\ddagger$ & $\mathbf{53.6 \pm 0.2}$ &1884 \\ \hhline{=======}
      Supervised state of the art                             &  $70.1$\cite{Swersky_NIPS2013} & ---      & $92.0$ \cite{Lee_NIPS2014} & $91.44$ \cite{He_ECCV2014} & $70.6$ \cite{Zeiler_ECCV2014}
       &---\\ \hline
      \end{tabular}
    }
  \end{minipage}
  \vskip -0.1in
\end{table*}
\setlength{\tabcolsep}{6pt}

At test time we applied a network to arbitrarily sized images by convolutionally computing the responses of all the network layers except the top softmax (that is, we computed the responses of convolutional layers normally and then slided the fully connected layers on top of these).
To the feature maps of each layer we applied the pooling
method that is commonly used for the respective dataset:
\begin{enumerate}
 \item[1)] 4-quadrant
max-pooling, resulting in $4$ values per feature map, which is the
standard procedure for STL-10 and CIFAR-10~\cite{Coates_NIPS2011,
  Zou_NIPS2012, Bo_ISER2012, Hui_ICML2013}
 \item[2)] 3-layer spatial
pyramid, i.e. max-pooling over the whole image as well as within 4
quadrants and within the cells of a $4 \times 4$ grid, resulting in
$1+4+16=21$ values per feature map, which is the standard for
Caltech-101 and Caltech-256~\cite{Boureau_ICCV2011, Zou_NIPS2012,
  Bo_CVPR2013}
\end{enumerate}
Finally, we trained a one-vs-all linear support vector machine
(SVM) on the pooled features.

On all datasets we used the standard training and test protocols. On STL-10 the SVM
was trained on 10 pre-defined folds of the training data. We report the mean and standard
deviation achieved on the fixed test set. For CIFAR-10 we
report two results:
\begin{enumerate}
\item[1)] Training the SVM on the whole CIFAR-10 training set
(called \textit{CIFAR-10})
\item[2)] The average over 10 random selections of 400 training
samples per class (called \textit{CIFAR-10(400)})
\end{enumerate}
For Caltech-101 we follow the usual
protocol of selecting 30 random samples per class for training and not
more than 50 samples per class for testing. For Caltech-256 we randomly selected 30 samples per class for training and used the rest for testing. Both for Caltech-101 and Caltech-256 we repeated the testing procedure 10 times.

\subsection{Classification Results\label{sec:classification}}
In Table~\ref{tbl:classification} we compare Exemplar-CNN to several
unsupervised feature learning methods, including the current state of the art on each dataset. We also list the state of the art for methods involving supervised feature learning (which is not directly comparable).
Additionally we show the dimensionality of the feature vectors produced by each
method before final pooling.
The smallest network was trained on $8000$ surrogate classes containing $150$
samples each and the larger ones on $16000$ classes with $100$ samples each.

The features extracted from both larger networks outperform the
best prior result on all datasets. This is despite the fact that the
dimensionality of the feature vectors is smaller than that of most
other approaches and that the networks are trained on the STL-10
unlabeled dataset (i.e. they are used in a transfer learning manner
when applied to CIFAR-10 and Caltech). The increase in performance is especially pronounced when only few
labeled samples are available for training the SVM, as is the case
for all the datasets except full CIFAR-10.
This is in agreement with previous evidence that with increasing feature vector dimensionality and number of labeled samples, training an SVM
becomes less dependent on the quality of the features \cite{Coates_NIPS2011,Hui_ICML2013}.
Remarkably, on STL-10 we achieve an accuracy of $74.2 \%$, which is a
large improvement over all previously reported results.

\footnotetext{
  On Caltech-101 one can either measure average accuracy over all
  samples (average overall accuracy) or calculate the accuracy for
  each class and then average these values (average
  per-class accuracy). These differ, as some classes contain fewer than
  $50$ test samples. Most researchers in ML use average overall
  accuracy.
  }
\subsection{Detailed Analysis}

We performed additional experiments using the 64c5-64c5-128f network to study the effect of various design choices in Exemplar-CNN training and validate the invariance properties of the learned features.


\begin{table*}[t!]
  \caption[]{Classification accuracies with clustering
    (in percent).}
      \label{tbl:cluster}
  \begin{minipage}{\textwidth}
  \centering
    \small{
      \begin{tabular}{l|c|c|c|c|c}
      \hline
                                  Algorithm    &  STL-10                  &     CIFAR-10(400)        &     CIFAR-10     &      Caltech-101        &  Caltech-256(30)    \\ \hline
      64c5-64c5-128f                           & $69.5 \pm 0.4$           & $70.8 \pm 0.2$           &    $76.8$        & $79.5 \pm 0.6$          & $42.9 \pm 0.3$      \\
      64c5-128c5-256c5-512f                    & $74.9 \pm 0.4$           & $75.7 \pm 0.2$           & $82.6$           & $85.7 \pm 0.6$          & $51.4  \pm 0.4$      \\
      92c5-256c5-512c5-1024f                   & $\mathbf{75.4 \pm 0.3}$  & $\mathbf{77.4 \pm 0.2}$  & $\mathbf{84.3}$  & $\mathbf{87.2 \pm 0.6}$ & $\mathbf{53.7 \pm 0.6}$      \\
      \end{tabular}
    }
  \end{minipage}
  \vskip -0.1in
\end{table*}

\subsubsection{Number of Surrogate Classes}
We varied the number $N$ of surrogate classes between $50$ and $32000$. As a sanity check, we also tried classification with random
filters. The results are shown in Fig.~\ref{fig:var_num_classes}.

Clearly, the classification accuracy increases with the number of surrogate classes
until it reaches an optimum at about $8000$ surrogate classes after which it did not change or even
decreased. This is to be expected: the larger the number of surrogate classes, the more likely it is to draw very similar or even identical samples, which are hard or impossible to discriminate.
Few such cases are not detrimental to the classification performance, but as soon as such collisions dominate the set of surrogate labels, the discriminative loss is no longer reasonable and training
the network to the surrogate task no longer succeeds. To
check the validity of this explanation we also plot in
Fig.~\ref{fig:var_num_classes} the validation error on the surrogate data
after training the network. It rapidly grows as the number of surrogate
classes increases, showing that the surrogate classification task gets harder
with a growing number of classes. We observed that larger, more powerful networks
reach their peak performance for more surrogate classes than smaller networks. However,
the performance that can be achieved with larger networks saturates (not shown in
the figure).

It can be seen as a limitation that sampling too many, too similar images for training can even decrease the performance of the learned features. It makes the number and selection of samples a relevant parameter of the training procedure. However, this drawback can be avoided for example by clustering.

To demonstrate this, given the STL-10 unlabeled dataset containing 100,000 images, we first train a 64c5-128c5-256c5-512f Exemplar-CNN on a subset of 16,000 image patches. We then use this Exemplar-CNN to extract descriptors of all images from the dataset and perform clustering similar to~\cite{Singh_ECCV2012}. After discarding noisy and very similar clusters automatically (see Appendix~\ref{app:clustering} for details), this leaves us with $6510$ clusters with approximately $10$ images in each of them. To the images in each cluster we then apply the same augmentation as in the original Exemplar-CNN. Each augmented cluster serves as a surrogate class for training. Table~\ref{tbl:cluster} shows the classification performance of the features learned by CNNs from this training data. Clustering increases the classification accuracy on all datasets, in particular on STL by up to $2.4$\%, depending on the network. This shows that the small modification allows the approach to make use of large amounts of data. Potentially, using even more data or performing clustering and network training within a unified framework could further improve the quality of the learned features.

\subsubsection{Number of Samples per Surrogate Class}


\begin{figure}
\centering
  \includegraphics[width=.45\textwidth]{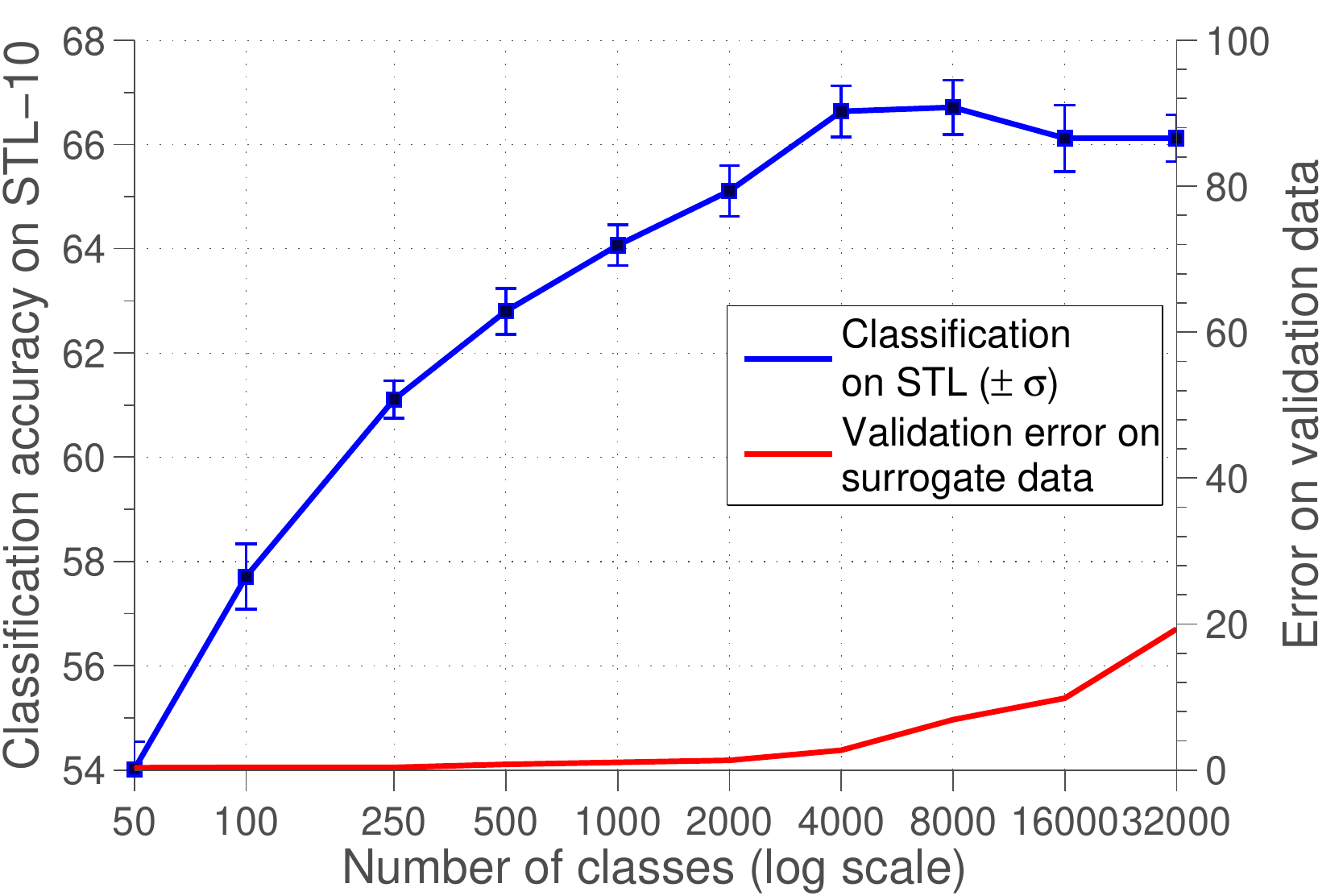}
  \caption{Influence of the
    number of surrogate training classes. The validation error on
    the surrogate data is shown in red.
    Note the different y-axes for the two curves.}
  \label{fig:var_num_classes}
\end{figure}

\begin{figure}
  \centering
  \includegraphics[width=.45\textwidth]{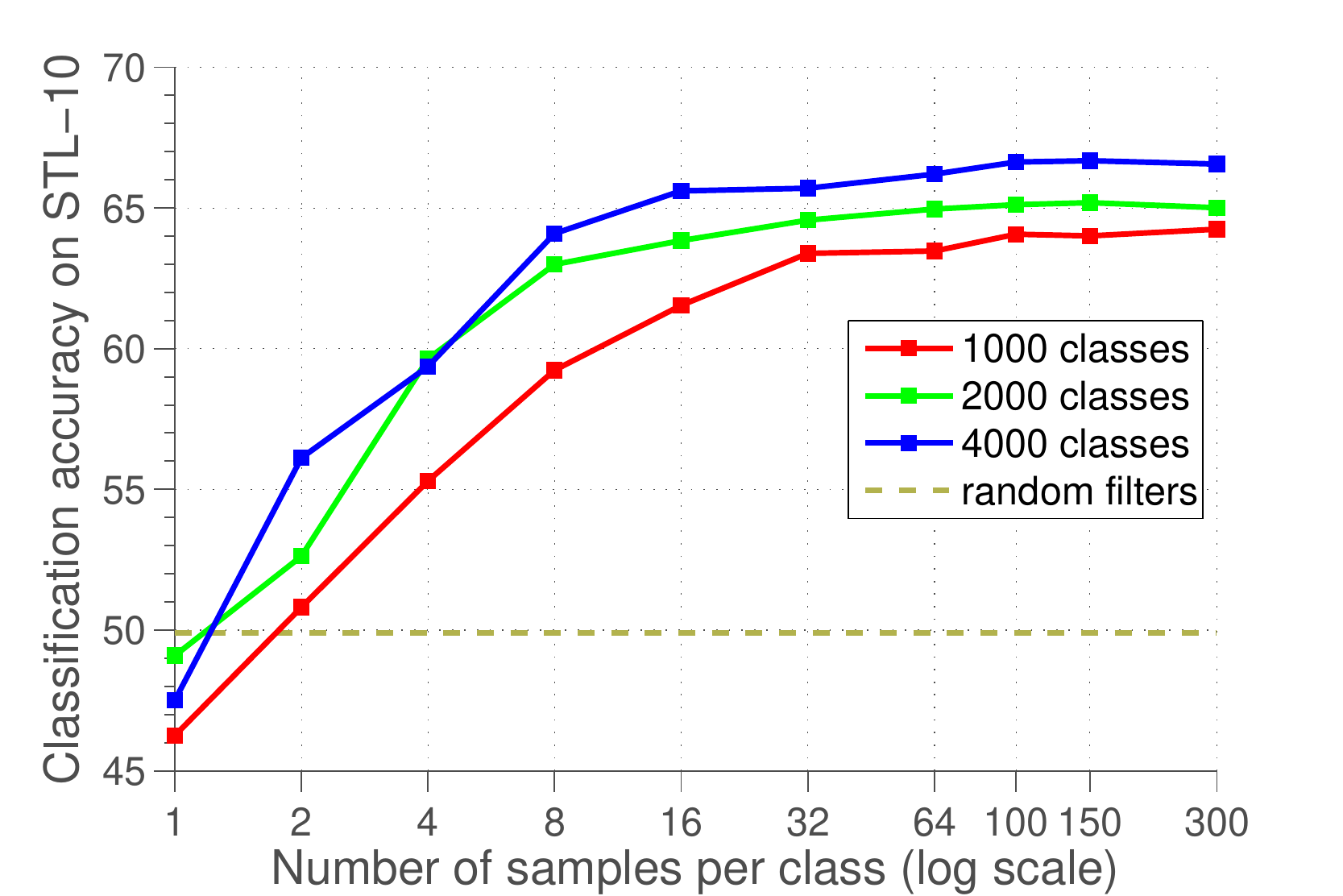}
  \caption{Classification performance on STL for different numbers
    of samples per class. Random filters can be seen as '0 samples per
    class'.}
  \label{fig:var_num_samples}
\end{figure}

Fig.~\ref{fig:var_num_samples} shows the classification accuracy when
the number $K$ of training samples per surrogate class varies between $1$ and
$300$. The performance improves with more samples per surrogate class and saturates at
around $100$ samples. This indicates that this amount is sufficient to approximate the formal
objective from Eq.~\eqref{eq:formal_objective}, hence further increasing the number of samples does not significantly change the optimization problem. On the other hand, if the number of samples is too small, there is not enough data to learn the desired invariance properties.

\subsubsection{Types of Transformations}

We varied the transformations used for creating the surrogate data to analyze their influence on the final
classification performance. The set of 'seed' patches was fixed. The result is shown in
Fig.~\ref{fig:removing_transformations}.
The value '$0$' corresponds to applying random compositions of all elementary transformations: scaling, rotation, translation, color variation, and contrast variation. Different columns of the plot show the difference
in classification accuracy as we discarded some types of elementary transformations.

Several tendencies can be observed. First, rotation and scaling have only a minor impact on the performance, while translations, color variations and contrast variations are significantly more important.
Secondly, the results on STL-10 and CIFAR-10 consistently show that
spatial invariance and color-contrast invariance are approximately of
equal importance for the classification performance. This indicates that
variations in color and contrast, though often neglected,
may also improve performance in a supervised learning scenario.
Thirdly, on Caltech-101 color and contrast transformations are much more important compared to spatial transformations than on the two other datasets. This is not surprising, since Caltech-101 images are often well aligned, and this dataset
bias makes spatial invariance less useful.

We tried applying several other transformations (occlusion,
affine transformation, additive Gaussian noise) in addition to the ones shown in
Fig.~\ref{fig:removing_transformations}, none of which seemed to improve the
classification accuracy.
For the matching task in Section~\ref{sect:matching}, though, we found that using blur as an
additional transformation improves the performance.



\begin{figure}
  \begin{center}
  \hspace*{-10pt}
    \includegraphics[width=.52\textwidth]{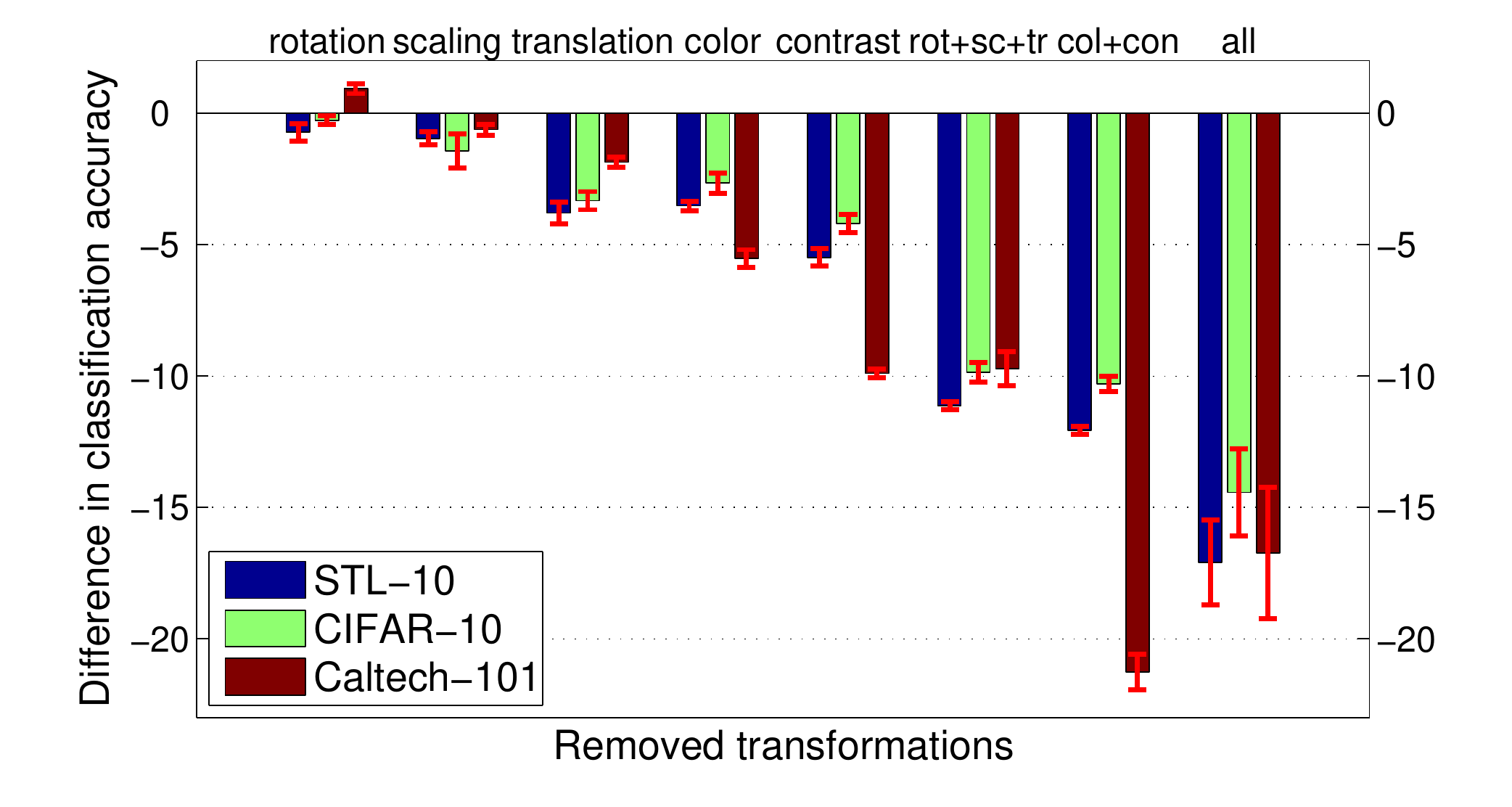}
    \vspace*{-20pt}
    \caption{Influence of
    removing groups of transformations during generation of the
    surrogate training data. Baseline ('$0$' value) is applying all
    transformations. Each group of three bars corresponds to removing
    some of the transformations.}
    \label{fig:removing_transformations}
  \end{center}
  \vspace*{-15pt}
\end{figure}

\begin{figure*}
\begin{center}
\includegraphics[width=.6\textwidth]{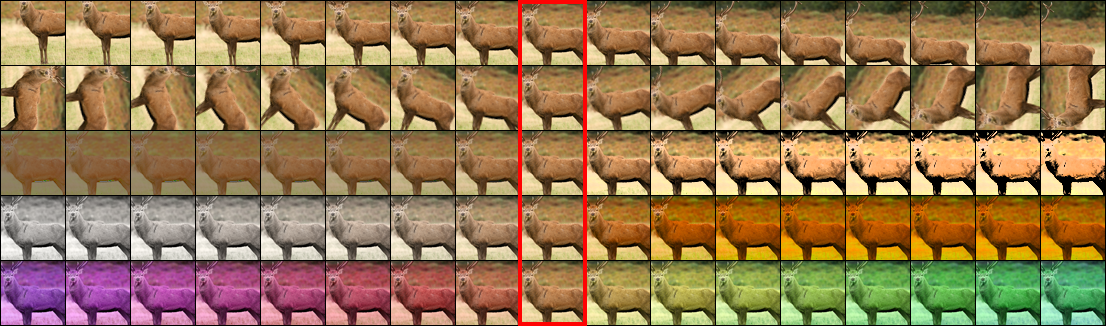}
\end{center}
\vspace*{10pt}
\begin{tabular}{ccc}
  \hspace*{-10 pt}\includegraphics[width=.35\textwidth]{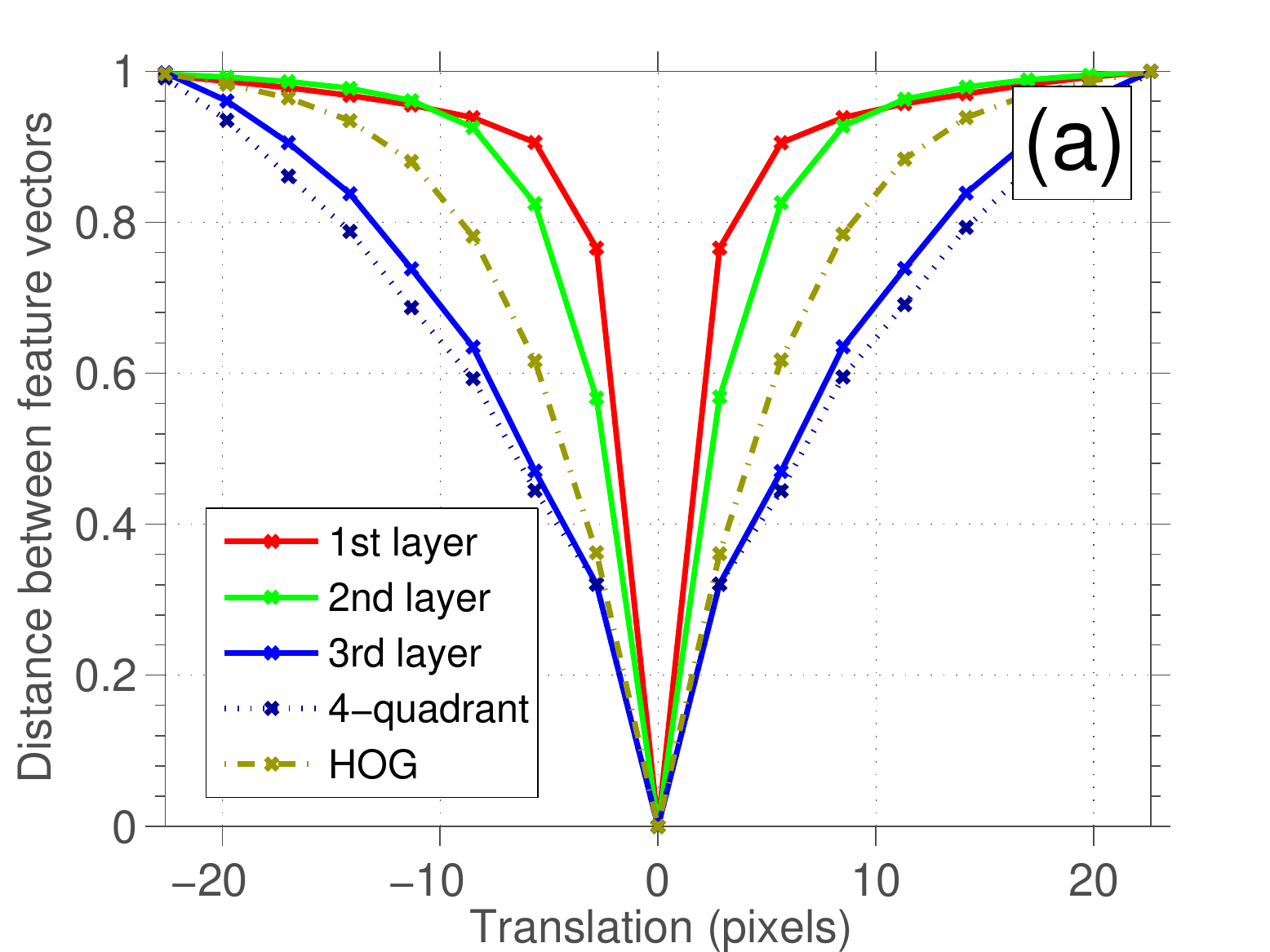} &
  \hspace*{-20 pt}\includegraphics[width=.35\textwidth]{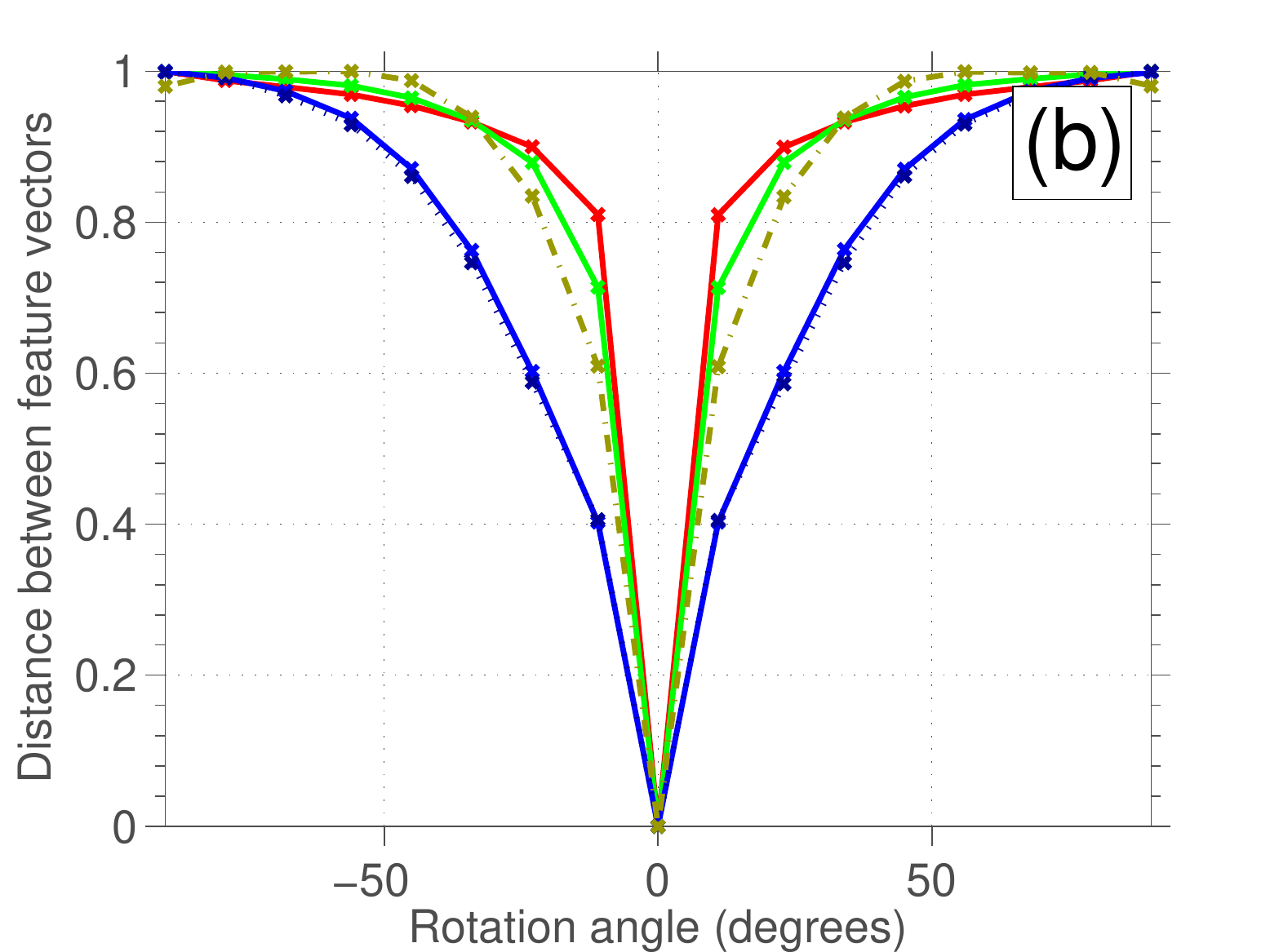} &
  \hspace*{-20 pt}\includegraphics[width=.35\textwidth]{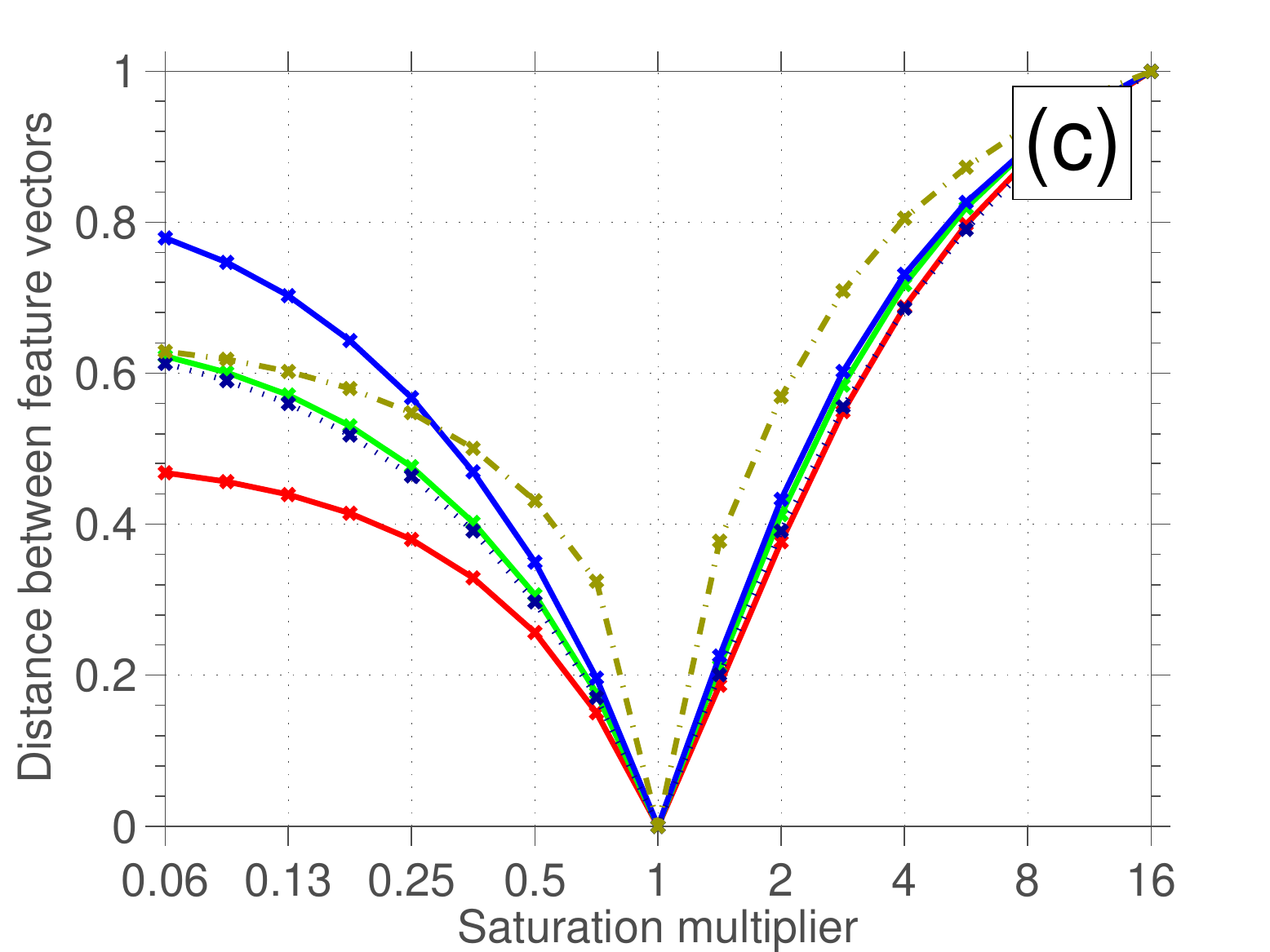} \\
  \hspace*{-10 pt}\includegraphics[width=.35\textwidth]{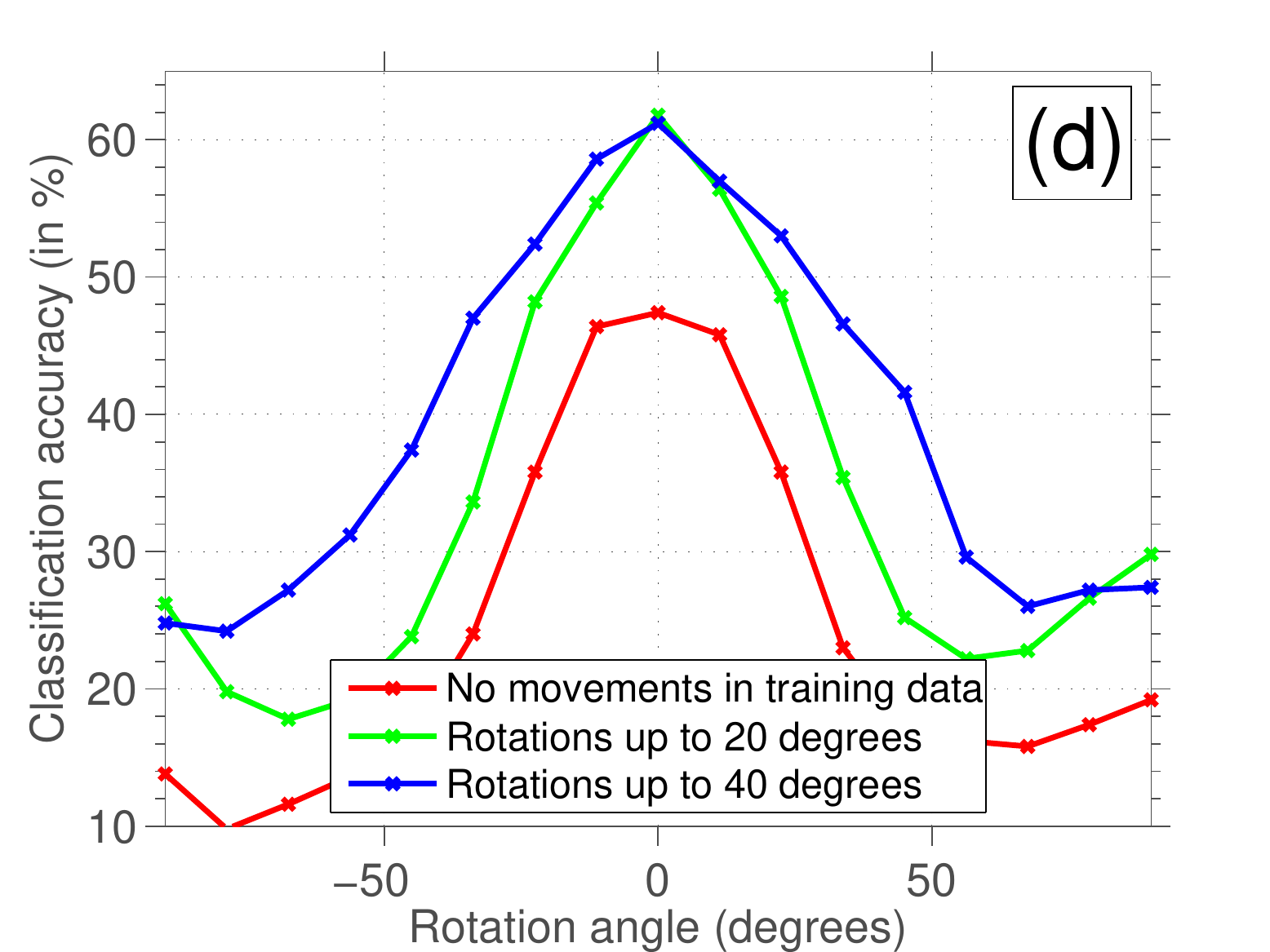} &
  \hspace*{-20 pt}\includegraphics[width=.35\textwidth]{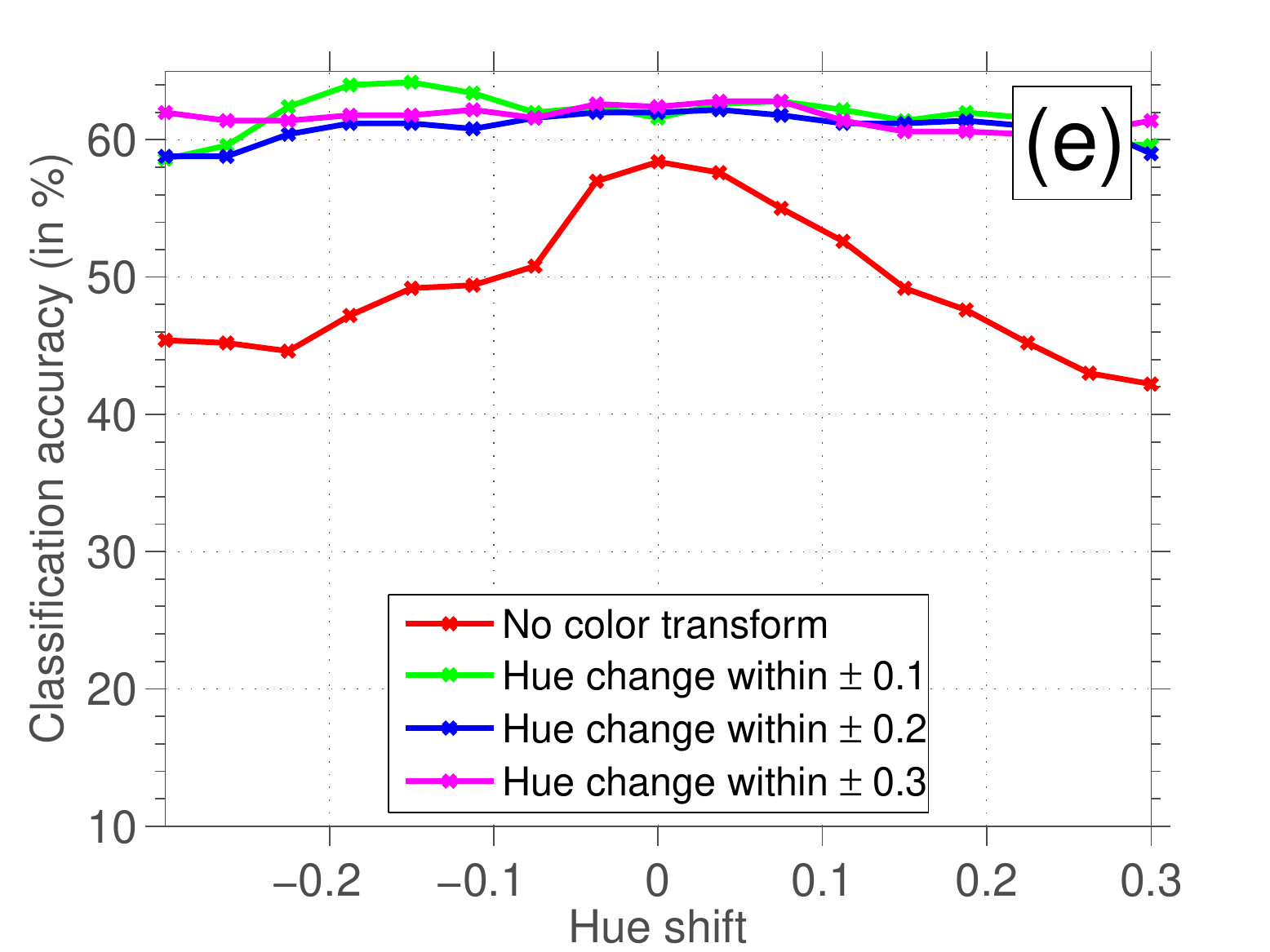} &
  \hspace*{-20 pt}\includegraphics[width=.35\textwidth]{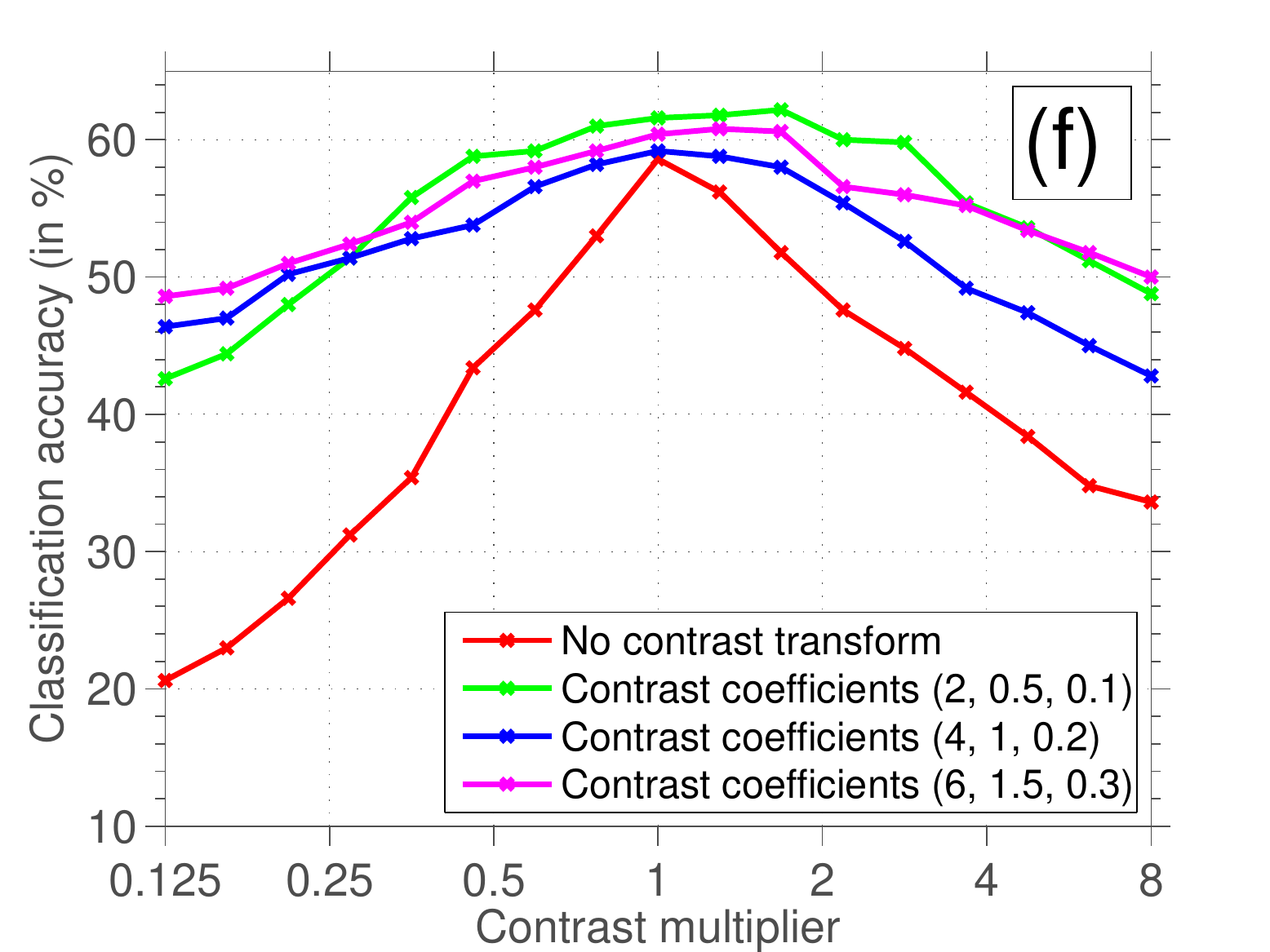}
\end{tabular}
\vskip -0.1in
\caption{Invariance properties of the feature representation learned by Exemplar-CNN. Top: transformations applied to an image patch (translation, rotation, contrast, saturation, color). Bottom: invariance of different feature representations. (a)-(c): Normalized Euclidean distance between feature vectors of the original and the translated image patches vs. the magnitude of the transformation, (d)-(f): classification performance on transformed image patches vs. the magnitude of the transformation for various magnitudes of transformations applied for creating the surrogate training data.}
\label{fig:invariance}
\vskip -0.1in
\end{figure*}

\hspace*{20pt}
\begin{figure}
\centering
\begin{tabular}{c}
  \includegraphics[width=.8\columnwidth]{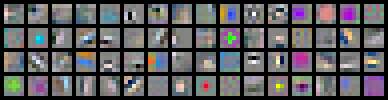} \\
  \includegraphics[width=.8\columnwidth]{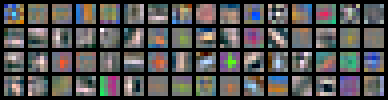} \\
  \includegraphics[width=.8\columnwidth]{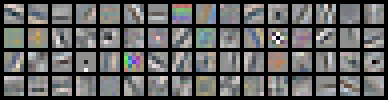}
\end{tabular}
\caption{Filters learned by first layers of 64c5-64c5-128f networks when training on surrogate data from various dataset. Top~-- from STL-10, middle~-- CIFAR-10, bottom~-- Caltech-101. }
\label{fig:layer1_filters}
\end{figure}

\subsubsection{Influence of the Dataset\label{sec:dataset}}
We applied our feature learning algorithm to images sampled from three
datasets~-- STL-10 unlabeled dataset, CIFAR-10 and Caltech-101~-- and evaluated the performance of the learned feature representations on classification tasks on these datasets. We used the 64c5-64c5-128f network for this experiment.


We show the first layer filters learned from the three datasets in
Fig.~\ref{fig:layer1_filters}. Note how filters qualitatively differ depending on the dataset they were trained on.

Classification results are shown in Table~\ref{tbl:datasets_comparison}. The best
classification results for each dataset are obtained when training on
the patches extracted from the dataset itself. However, the difference
is not drastic, indicating that the learned features generalize well to other datasets.

\setlength{\tabcolsep}{6pt}
\begin{table}
    \caption{Dependence of classification performance (in \%) on the
      training and testing datasets. Each column corresponds to
      different test data, each row to different training data
      (i.e. source of seed patches). We used the 64c5-64c5-128f network for this experiment.}
    \label{tbl:datasets_comparison}
    \renewcommand{\arraystretch}{1.1}
  \begin{sc}
  \begin{center}
    \begin{tabular}{l|c|c|c|}
    \cline{2-4}
                                             & \multicolumn{3}{|c|}{Testing}                                                  \\ \hline
    \multicolumn{1}{|c|}{Training}           & STL-10                    & CIFAR-10(400)                & Caltech-101              \\ \hline
    \multicolumn{1}{|c|}{STL-10}            & $\mathbf{67.1 \pm 0.3}$   & $69.7 \pm 0.3$          & $79.8 \pm 0.5$           \\
    \multicolumn{1}{|c|}{CIFAR-10}          & $64.5 \pm 0.4$            & $\mathbf{70.3 \pm 0.4}$ & $77.8 \pm 0.6$           \\
    \multicolumn{1}{|c|}{Caltech-101}        & $66.2 \pm 0.4$            & $69.5 \pm 0.2$          & $\mathbf{80.0 \pm 0.5}$  \\ \hline
    \end{tabular}
  \end{center}
  \end{sc}
\end{table}

\subsubsection{Influence of the Network Architecture on Classification Performance}

We perform an additional experiment to evaluate the influence of the
network architecture on classification performance. The results of
this experiment are shown in Table~\ref{tbl:architecture}. All
networks were trained using a surrogate training set containing either
$8000$ classes with $150$ samples each or $16000$ classes with $100$ samples each (for larger networks). We vary the number of layers,
layer sizes and filter sizes. Classification accuracy
generally improves with the network size indicating that our
classification problem scales well to relatively large networks without overfitting.

\setlength{\tabcolsep}{4pt}
\begin{table*}
      \caption{Classification accuracy depending on the network
        architecture. The name coding is as follows:  NcF stands for a convolutional layer with $N$ filters
        of size $F \times F$ pixels, Nf stands for a fully connected layer with
        $N$ units. For example, 64c5-64c5-128f denotes a network with
        two convolutional layers containing 64 filters spanning $5 \times 5$
        pixels each, followed by a fully connected layer with $128$
        units. We also show the number of surrogate classes used for training each network.}
      \label{tbl:architecture}
      \vspace*{10 pt}
\renewcommand{\arraystretch}{1.1}
  \centering
      \begin{tabular}{|l|c|c|c|c|c|}
      \hline
      Architecture                 &\#classes & STL-10          &  CIFAR-10(400)   &  CIFAR-10      &  Caltech-101            \\ \hline
      32c5-32c5-64f                &   8000   & $63.8 \pm 0.4$  &  $66.1 \pm 0.4$  &    $71.3$      &  $78.2 \pm 0.6$         \\ \hline
      64c5-64c5-128f               &   8000   & $67.1 \pm 0.3$  &  $69.7 \pm 0.3$  &    $75.7$      &  $79.8 \pm 0.5$         \\ \hline
      64c7-64c5-128f               &   8000   & $66.3 \pm 0.4$  &  $69.5 \pm 0.3$  &    $75.0$      &  $79.4 \pm 0.7$         \\ \hline
      64c5-64c5-64c5-128f          &   8000   & $68.5 \pm 0.3$  &  $70.9 \pm 0.3$  &    $77.0$      &  $82.2 \pm 0.7$         \\ \hline
      64c5-64c5-64c5-64c5-128f     &   8000   & $64.7 \pm 0.5$  &  $67.5 \pm 0.3$  &    $75.2$      &  $75.7 \pm 0.4$         \\ \hline
      128c5-64c5-128f              &   8000   & $67.2 \pm 0.4$  &  $69.9 \pm 0.2$  &    $76.1$      &  $80.1 \pm 0.5$         \\ \hline
      64c5-256c5-128f              &   8000   & $69.2 \pm 0.3$  &  $71.7 \pm 0.3$  &    $77.9$      &  $81.6 \pm 0.5$         \\ \hline
      64c5-64c5-512f               &   8000   & $69.0 \pm 0.4$  &  $71.7 \pm 0.2$  &    $79.3$      &  $82.9 \pm 0.4$         \\ \hline
      128c5-256c5-512f             &   8000   & $71.2 \pm 0.3$  &  $73.9 \pm 0.3$  &    $81.5$      &  $84.3 \pm 0.6$         \\ \hline
      128c5-256c5-512f             &  16000   & $71.9 \pm 0.3$  &  $74.3 \pm 0.3$  &    $81.4$      &  $84.6 \pm 0.6$         \\ \hline
      64c5-128c5-256c5-512f        &  16000   & $72.8 \pm 0.4$  &  $75.3 \pm 0.3$  &    $82.0$      &  $85.5 \pm 0.4$         \\ \hline
      92c5-256c5-512c5-1024f       &  16000   & $73.9 \pm 0.4$  &  $76.0 \pm 0.2$  &    $83.6$      &  $86.9 \pm 0.6$         \\ \hline
      \end{tabular}
\end{table*}

\subsubsection{Invariance Properties of the Learned Representation}
We analyzed to which extent the representation
learned by the network is invariant to the
transformations applied during training.
We randomly sampled $500$ images from the STL-10 test set and applied
a range of transformations (translation, rotation, contrast, color) to
each image. To avoid empty regions beyond the image boundaries when applying spatial transformations, we cropped the
central $64\times64$ pixel sub-patch from each $96\times 96$ pixel
image. We then applied two measures of invariance to these
patches.

First, as an explicit measure of invariance, we calculated the normalized Euclidean distance between
normalized feature vectors of the original image patch and the transformed one~\cite{Zou_NIPS2012} (see
Appendix~\ref{appendix:invariance_details} for details). The
downside of this approach is that the distance between extracted
features does not take into account how informative and discriminative they are.
We therefore evaluated a second measure ~-- classification
performance depending on the magnitude of the transformation applied
to the classified patches ~-- which does not come with this
problem. To compute the classification accuracy, we trained an SVM
on the central $64\times64$ pixel patches from one fold of the STL-10
training set and measured classification performance on all transformed versions of $500$
samples from the test set.

The results of both
experiments are shown in Fig.~\ref{fig:invariance}. Overall the experiment empirically confirms that the Exemplar-CNN objective leads
to learning invariant features. Features in the third layer and the final pooled feature representation compare favorably
to a HOG baseline (Fig. \ref{fig:invariance} (a), (b)). This is consistent with the results we get in Section~\ref{sect:matching} for descriptor matching, where we compare the features to SIFT (which is similar to HOG).

Fig.~\ref{fig:invariance}(d)-(f) further show that stronger transformations
in the surrogate training data lead to a more invariant classification
with respect to these
transformations. However, adding too much contrast
variation may deteriorate classification performance (Fig. \ref{fig:invariance} (f)).
One possible reason is that the contrast level can be a useful feature: for example, strong edges in an image are usually more important than weak ones.

\section{Experiments: Descriptor Matching}\label{sect:matching}

In recognition tasks, such as image classification and object detection, the invariance requirements are largely defined by object class labels. Consequently, providing these class labels already when learning the features should be advantageous. This can be seen in the comparison to the supervised state-of-the-art in Table~\ref{tbl:classification}, where supervised feature learning performs better than the presented approach.

In contrast, matching of interest points in two images should be independent of object class labels. As a consequence, there is no apparent reason, why feature learning using class annotation should outperform unsupervised feature learning. One could even imagine that the class annotation is confusing and yields inferior features for matching.

\subsection{Compared Features}

We compare the features learned by supervised and unsupervised convolutional networks and SIFT~\cite{Lo04} features. For a long time SIFT has been the preferred descriptor in matching tasks  (see~\cite{mikopami05} for a comparison).

As supervised CNN we used the AlexNet model trained on ImageNet available at~\cite{caffe}. The architecture of the network follows Krizhevsky~et~al.~\cite{Krizhevsky_NIPS2012} and contains 5~convolutional layers followed by 2~fully connected layers. In the experiments, we extract features from one of the 5 convolutional layers of the network. For large input patch sizes, the output dimensionality is high, especially for lower layers. For the descriptors to be more comparable to SIFT, we decided to max-pool the extracted feature map down to a fixed $4 \times 4$ spatial size which corresponds to the spatial resolution of SIFT pooling. Even though the spatial size is the same, the number of features per cell is larger than for SIFT.



As unsupervised CNN we evaluated the matching performance of the 64c5-128c5-256c5-512f architecture, referred to as Exemplar-CNN-orig in the following.
As the experiments show, neural networks cannot handle blur very well. Increasing image blur always leads to a matching performance drop. Hence we also trained another Exemplar-CNN to deal with this specific problem. First, we increased the filter size and introduced a stride of 2 in the first convolutional layer, resulting in the following architecture: 64c7s2-128c5-256c5-512f. This allows the network to  identify edges in very blurry images more easily. Secondly, we used unlabeled images from Flickr for training, because these represent the general distribution of natural images better than STL. Thirdly, we applied blur of variable strength to the training data as an additional augmentation. We thus call this network Exemplar-CNN-blur. As with AlexNet, we max-pooled the feature maps produced by the Exemplar-CNNs to a $4 \times 4$ spatial size.


\subsection{Datasets}
The common matching dataset by Mikolajczyk et al.~\cite{mikoijcv05} contains only $40$ image pairs. This dataset size limits the reliability of conclusions drawn from the results, especially as we compare various design choices, such as the depth of the network layer from which we draw the features.  We set up an additional dataset that contains $384$ image pairs. It was generated by applying 6 different types of transformations with varying strengths to $16$ base images we obtained from Flickr. These images were not contained in the set we used to train the unsupervised CNN.

To each base image we applied the geometric transformations \emph{rotation}, \emph{zoom}, \emph{perspective}, and \emph{nonlinear deformation}. These cover rigid and affine transformations as well as more complex ones. Furthermore we applied changes to \emph{lighting} and focus by adding \emph{blur}.
Each transformation was applied in various magnitudes such that its effect on the performance could be analyzed in depth. For each of the 16 base images we matched all the transformed versions of the image to the original one, which resulted in $384$
matching pairs.

The dataset from Mikolajczyk et al.~\cite{mikoijcv05} was not generated synthetically but contains real photos taken from different viewpoints or with different camera settings. While this reflects reality better than a synthetic dataset, it also comes with a drawback: the transformations are directly coupled with the respective images. Hence, attributing performance changes to either different image contents or to the applied transformations becomes impossible. In contrast, the new dataset enables us to evaluate the effect of each type of transformation independently of the image content.

\begin{figure*}
\begin{tabular}{cccc}
  \hspace*{-7 pt}\includegraphics[width=.25\textwidth]{sift_patchsize.pdf} &
  \hspace*{-7 pt}\includegraphics[width=.25\textwidth]{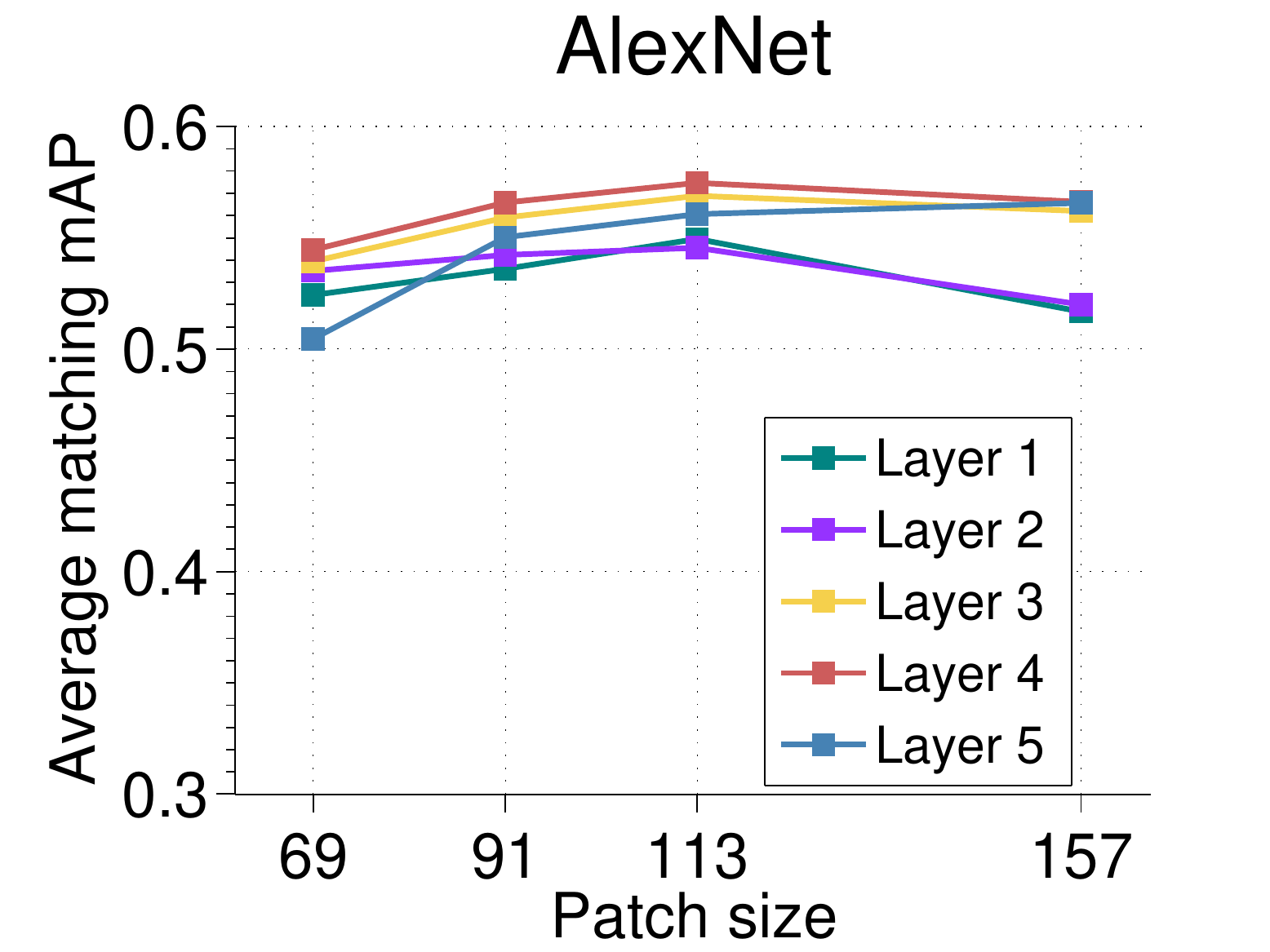} &
  \hspace*{-7 pt}\includegraphics[width=.25\textwidth]{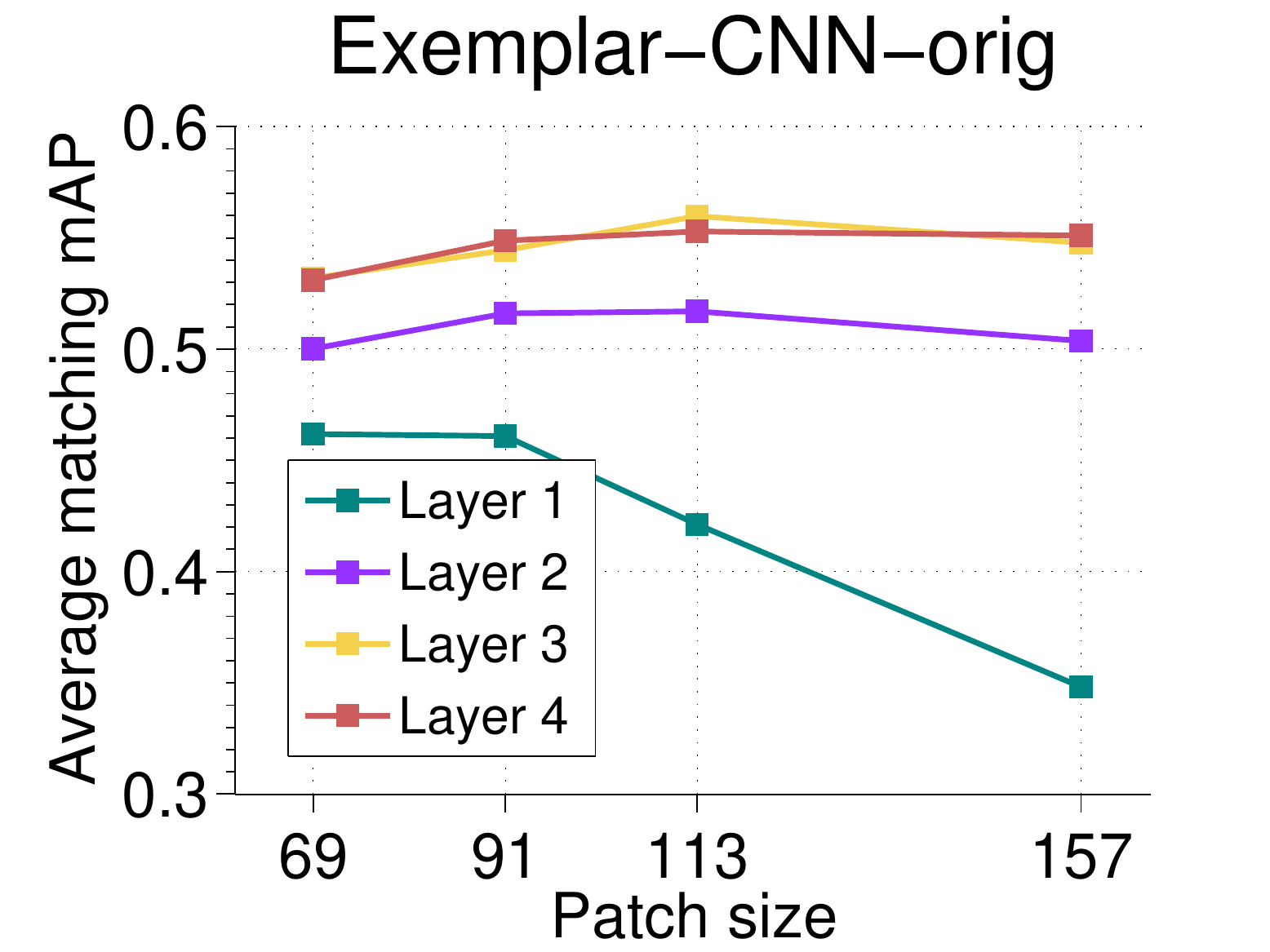} &
  \hspace*{-7 pt}\includegraphics[width=.25\textwidth]{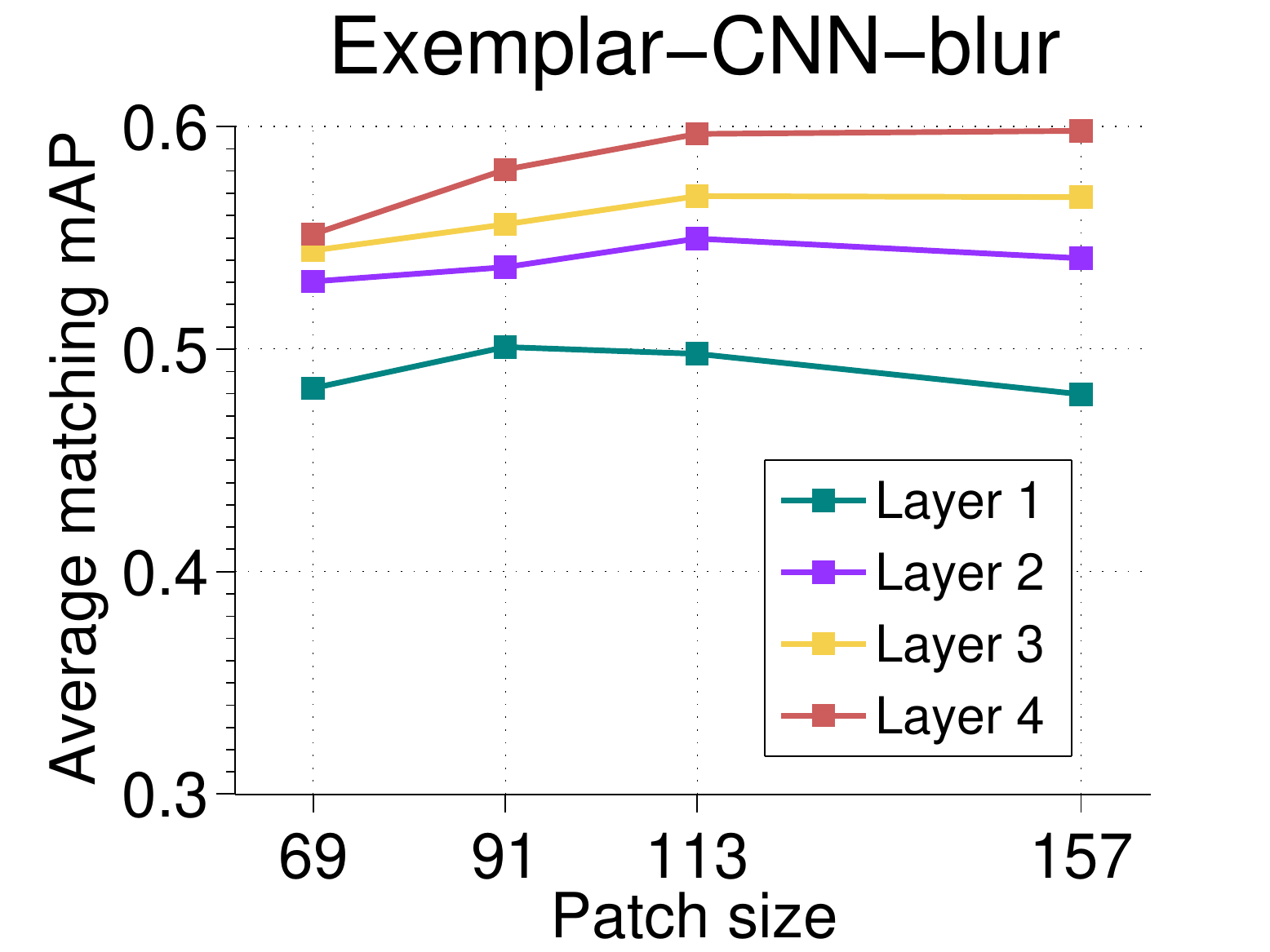} \\
\end{tabular}
\caption{Analysis of the matching performance depending on the patch size and the network layer at which features are computed.}
\label{fig:matching_patchsize}
\end{figure*}

\subsection{Performance Measure}
To evaluate the matching performance for a pair of images, we followed the procedure described in \cite{mikopami05}.
We first extracted elliptic regions of interest and corresponding image patches from both images using the \emph{maximally stable extremal regions} (MSER) detector~\cite{mser}.
We chose this detector because it was shown to perform consistently well in \cite{mikoijcv05} and it is widely used.
For each detected region we extracted a patch according to the region scale and rotated it according to its dominant orientation.
The descriptors of all extracted patches were greedily matched based on the Euclidean distance. This yielded a ranking of descriptor pairs.
A pair was considered as a true positive if the ellipse of the descriptor in the target image and the ground truth ellipse in the target image had an intersection over union (IOU) of at least $0.5$. All other pairs were considered false positives.
Assuming that a recall of 1 corresponds to the best achievable overall matching given the detections, we computed a precision-recall curve. The average precision, i.e., the area under this curve, was used as performance measure.

\begin{figure*}
\begin{tabular}{ccc}
  \includegraphics[width=.33\textwidth]{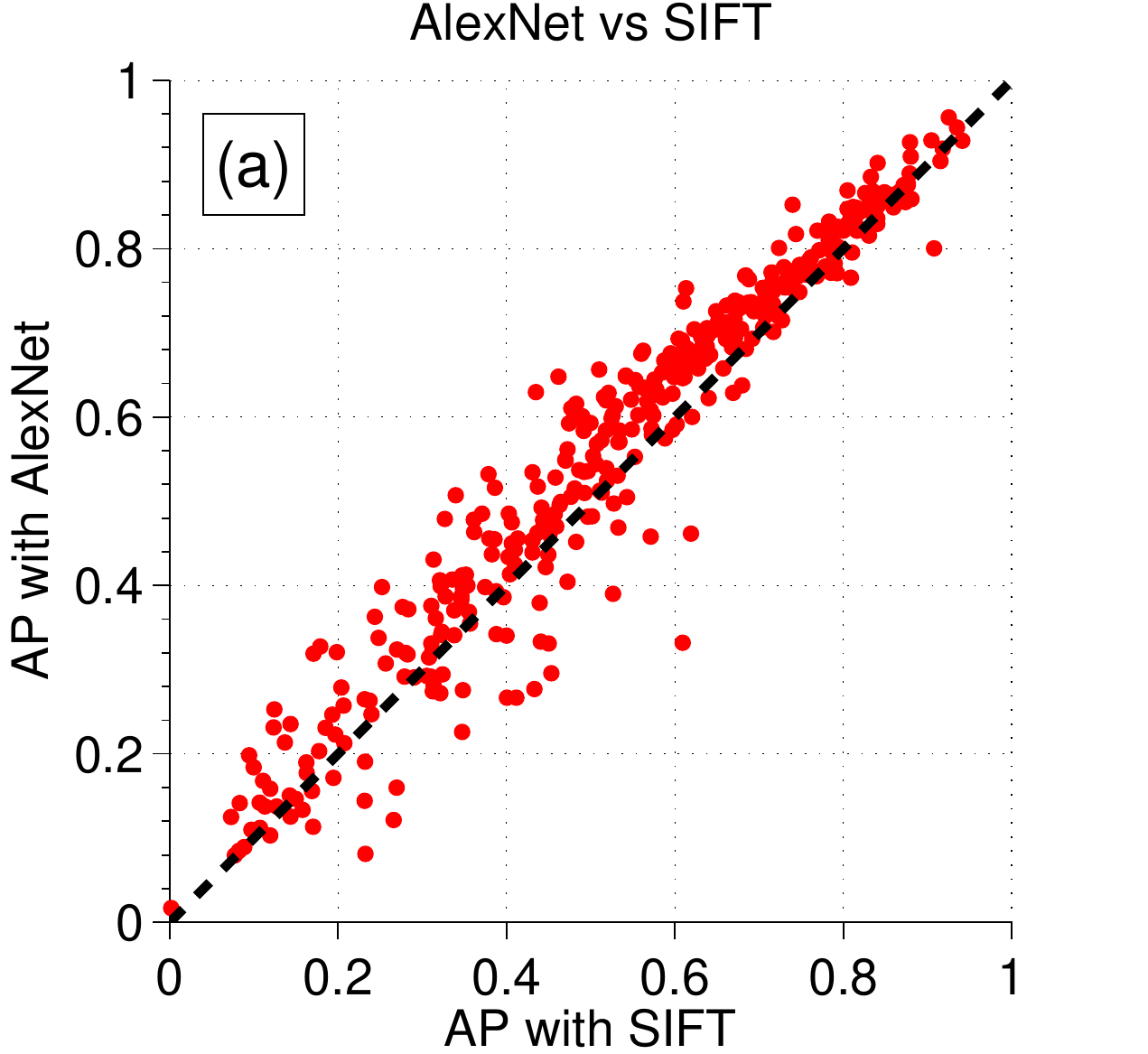} \hspace*{-20pt}&
  \includegraphics[width=.33\textwidth]{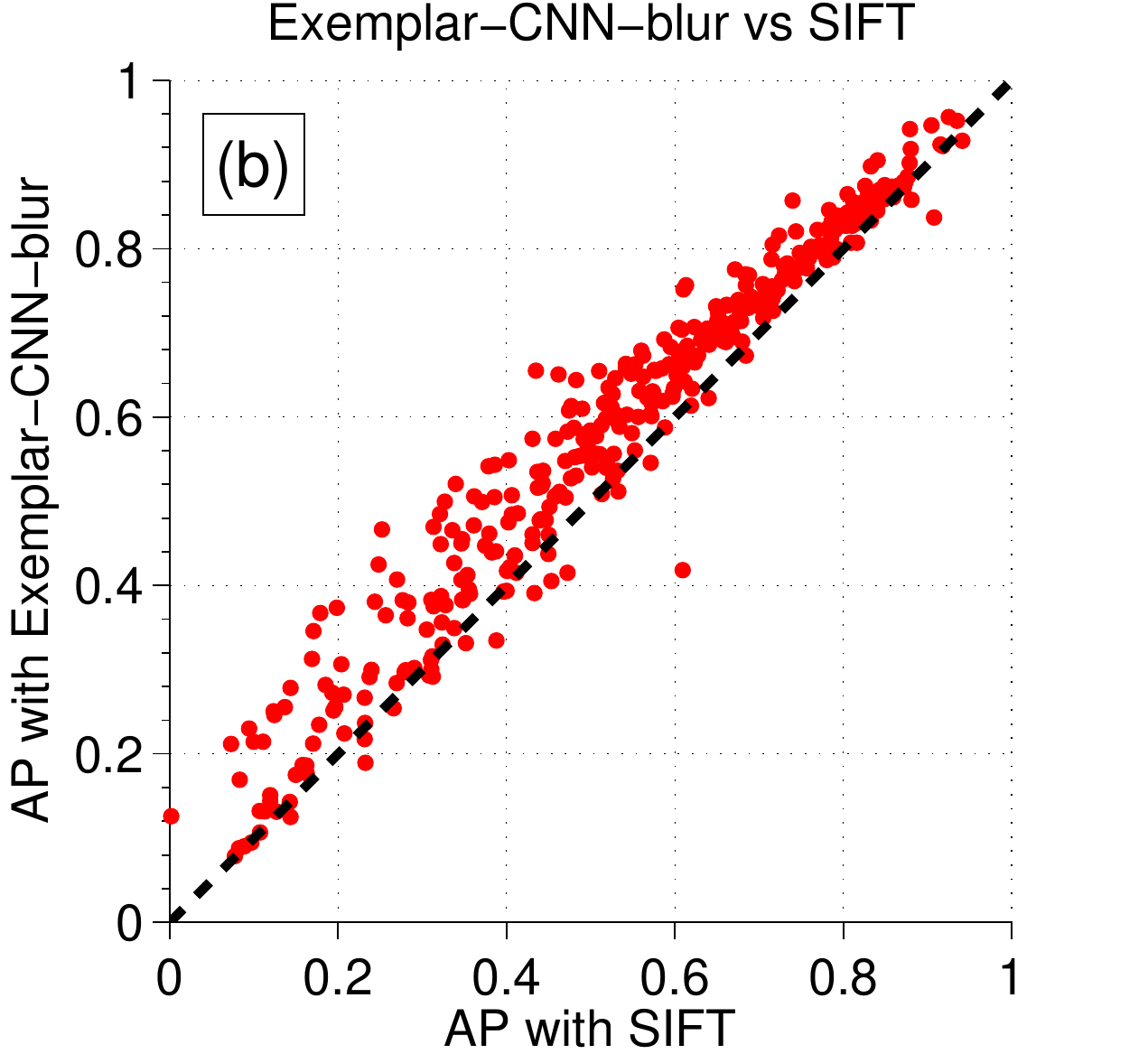} \hspace*{-20pt}&
  \includegraphics[width=.33\textwidth]{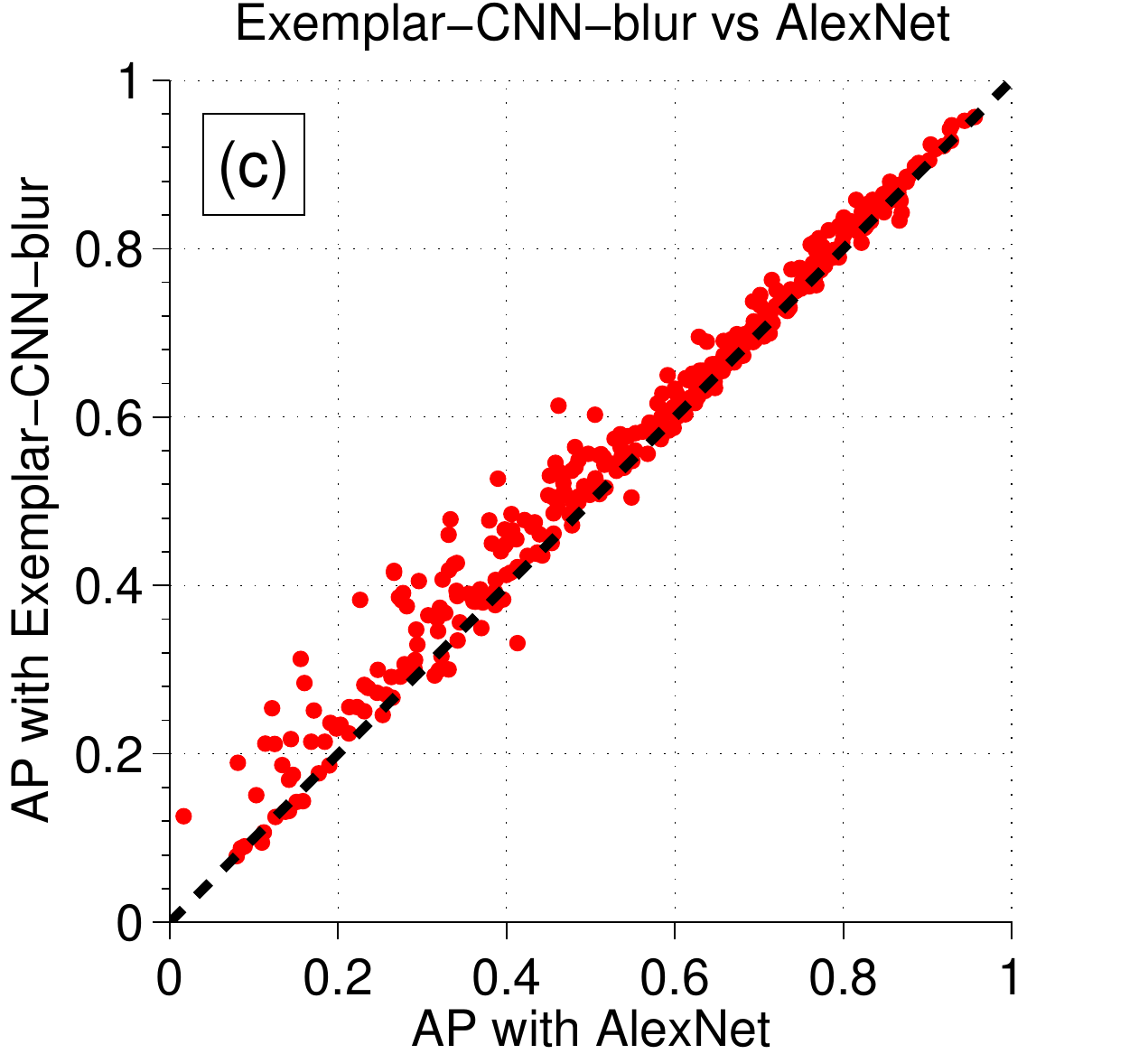} \\
  \includegraphics[width=.33\textwidth]{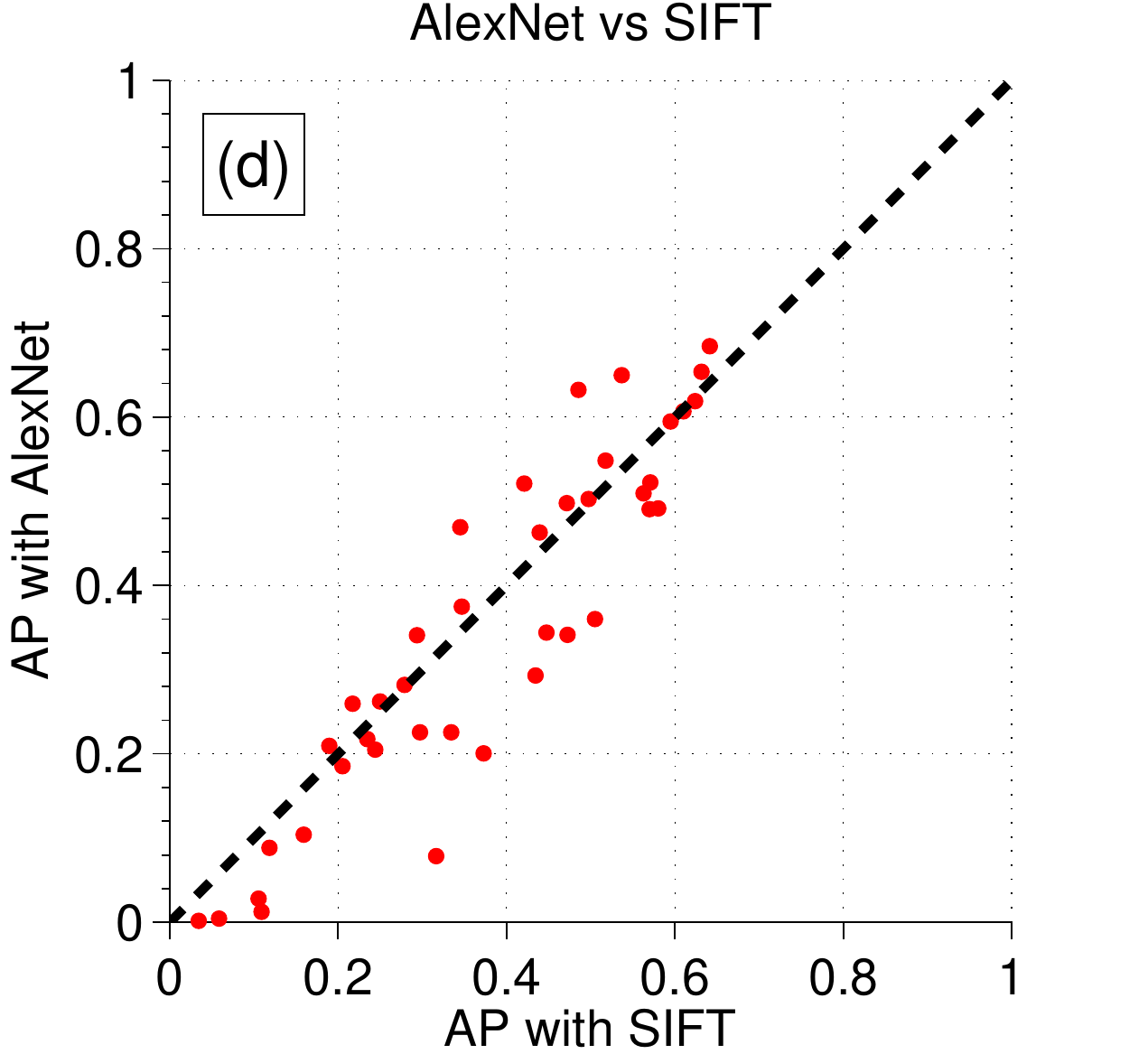} \vspace*{5 pt}\hspace*{-20pt} &
  \includegraphics[width=.33\textwidth]{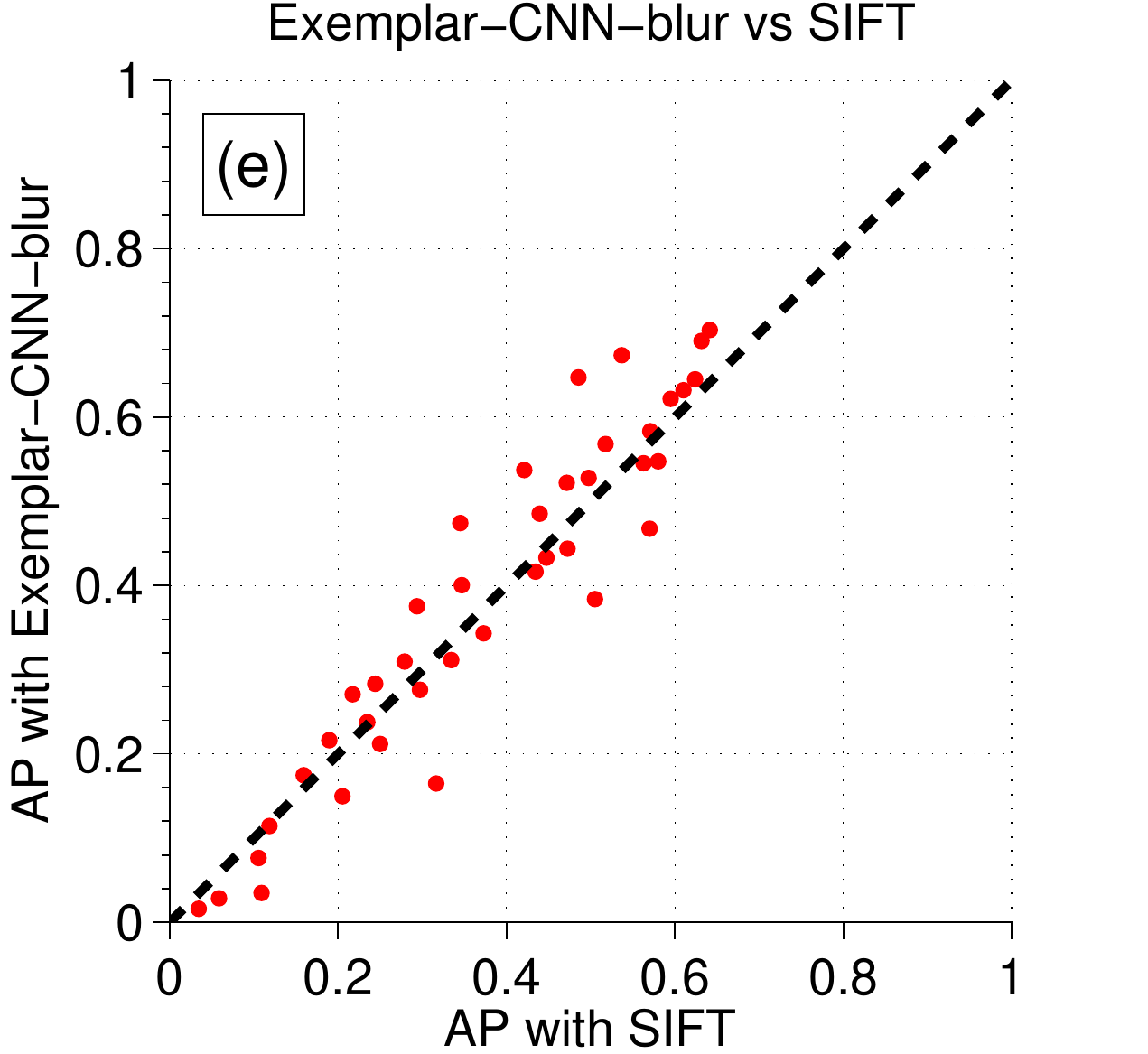} \hspace*{-20pt}&
  \includegraphics[width=.33\textwidth]{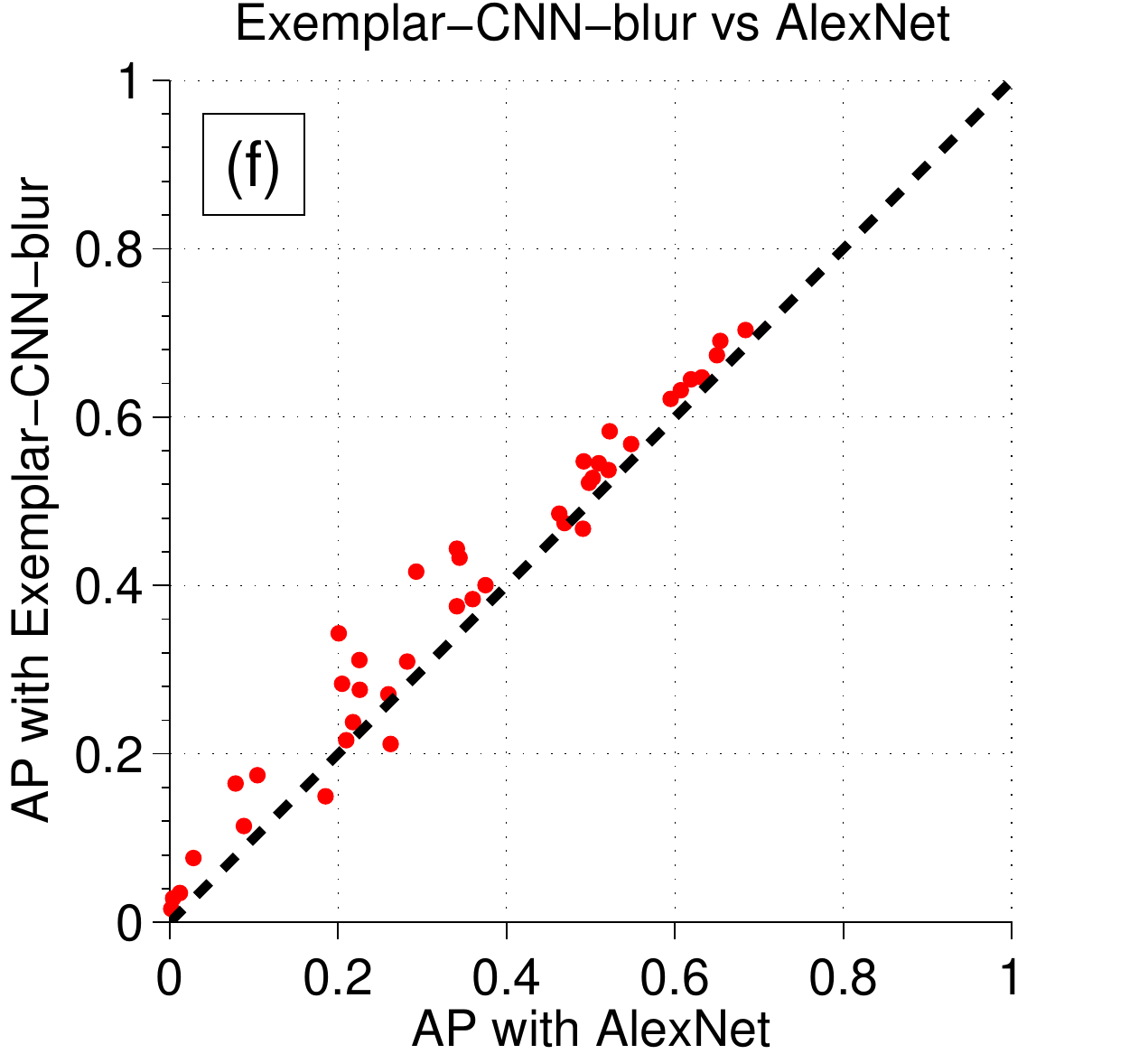}
\end{tabular}
\caption{Scatter plots for different pairs of descriptors on the \textbf{Flickr dataset (upper row)} and the \textbf{Mikolajczyk dataset (lower row)}. Each point in a scatter plot corresponds to one image pair, and its coordinates are the AP values obtained with the compared descriptors. AlexNet (supervised training) and the Exemplar-CNN yield features that outperform SIFT on most images of the Flickr dataset (a,b), but AlexNet is inferior to SIFT on the Mikolajczyk dataset. Features obtained with the unsupervised training procedure outperform the features from AlexNet on both datasets (c,f).}
\label{fig:scatterplots}
\end{figure*}

\subsection{Patch size and network layer}

The MSER detector returns ellipses of varying sizes, depending on the scale of the detected region. To compute descriptors from these elliptic regions we normalized the image patches to a fixed size. It is not immediately clear which patch size is best: larger patches provide a higher resolution, but enlarging them too much may introduce interpolation artifacts and the effect of high-frequency noise may be emphasized. Therefore, we optimized the patch size on the Flickr dataset for each method.

When using convolutional neural networks for region description, aside from the patch size there is another fundamental choice~-- the network layer from which the features are extracted. Features from higher layers are more abstract.

Fig.~\ref{fig:matching_patchsize} shows the average performance of each method when varying the patch size between $69$ and $157$. We chose the maximum patch size value such that most ellipses are smaller than that. We found that in case of SIFT, the performance monotonously grows and saturates at the maximum patch size. SIFT is based on normalized finite differences, and thus very robust to blurred edges caused by interpolation. In contrast, for the networks, especially for their lower layers, there is an optimal patch size, after which performance starts degrading. The lower network layers typically learn Gabor-like filters tuned to certain frequencies. Therefore, they suffer from over-smoothing caused by interpolation. Features from higher layers have access to larger receptive fields and, thus, can again benefit from larger patch sizes.

In the following experiments we used the optimal parameters given by Fig.~\ref{fig:matching_patchsize}: patch size $157$ for SIFT and $113$ for all other methods; layer $4$ for AlexNet and Exemplar-CNN-blur and layer $3$ for Exemplar-CNN-orig.

\begin{figure*}
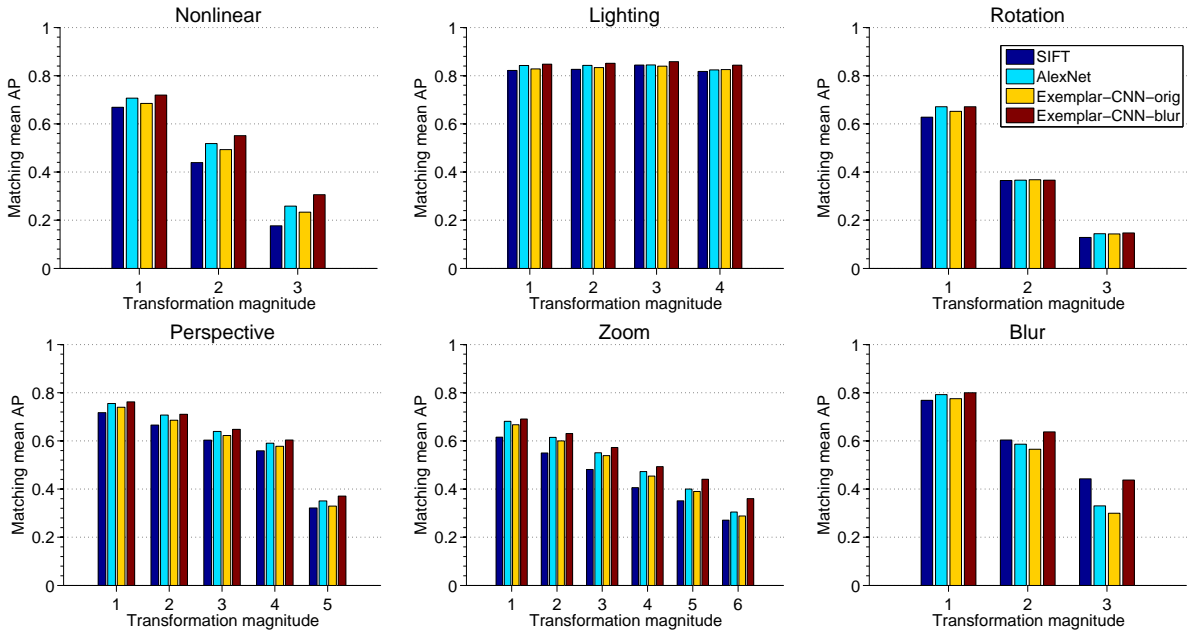

\begin{tabular}{ccc}
  \hspace*{-6 pt}\includegraphics[width=.3\textwidth]{nonlinear.pdf} &
  \hspace*{-11 pt}\includegraphics[width=.3\textwidth]{lighting.pdf} &
  \hspace*{-11 pt}\includegraphics[width=.3\textwidth]{rotation.pdf} \\ \vspace*{5 pt}
  \hspace*{-6 pt}\includegraphics[width=.3\textwidth]{perspective.pdf} &
  \hspace*{-11 pt}\includegraphics[width=.3\textwidth]{zoom.pdf} &
  \hspace*{-11 pt}\includegraphics[width=.3\textwidth]{blur.pdf} \\
\end{tabular}
\caption{Mean average precision on the \textbf{Flickr dataset} for various transformations. Except for the blur transformation, all networks perform consistently better than SIFT. The network trained with blur transformations can keep up with SIFT even on blur.}
\label{fig:training_matching}
\end{figure*}

\begin{figure*}
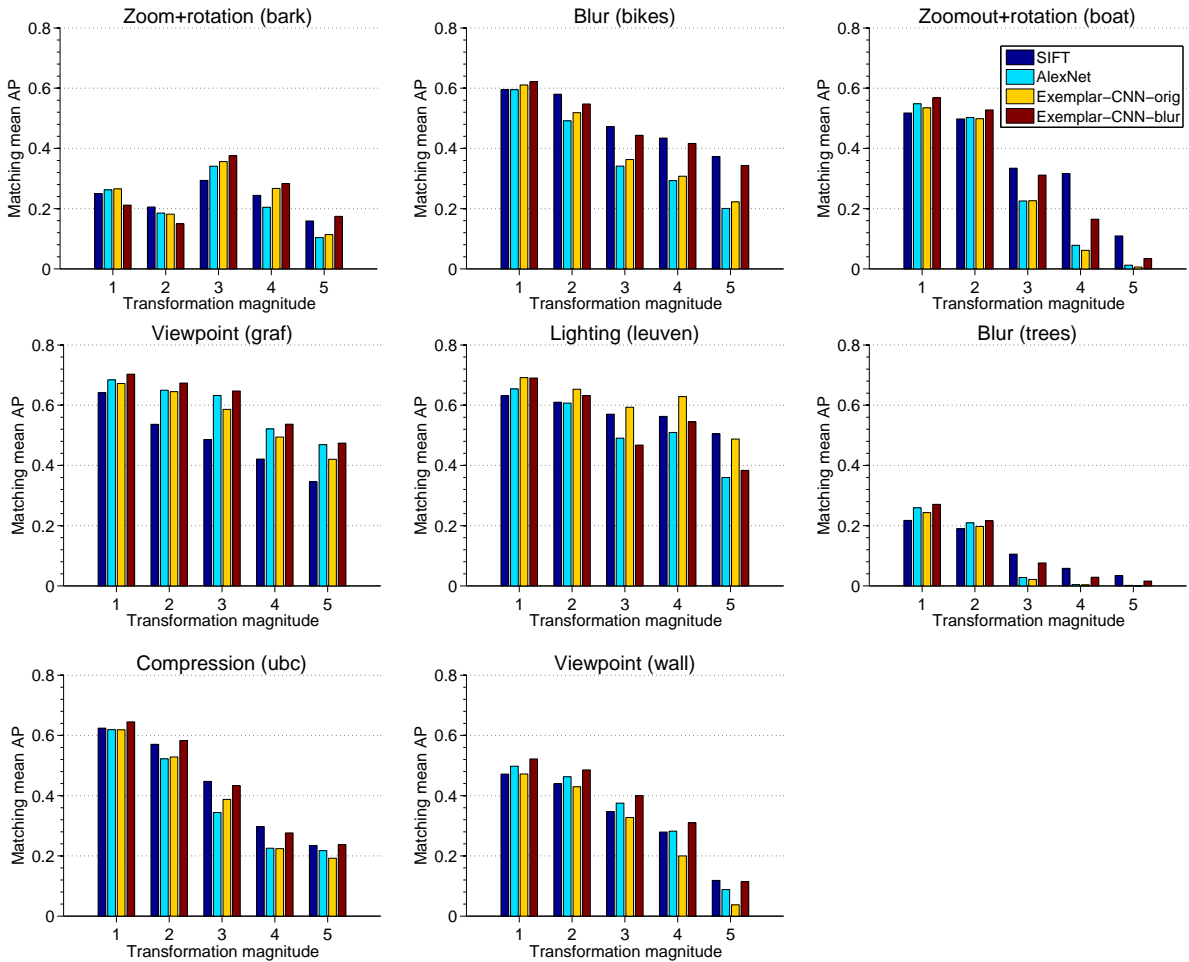

\begin{tabular}{ccc}
  \hspace*{-6 pt}\includegraphics[width=.3\textwidth]{miko1.pdf} &
  \hspace*{-11 pt}\includegraphics[width=.3\textwidth]{miko2.pdf} &
  \hspace*{-11 pt}\includegraphics[width=.3\textwidth]{miko3.pdf} \\ \vspace*{5 pt}
  \hspace*{-6 pt}\includegraphics[width=.3\textwidth]{miko4.pdf} &
  \hspace*{-11 pt}\includegraphics[width=.3\textwidth]{miko5.pdf} &
  \hspace*{-11 pt}\includegraphics[width=.3\textwidth]{miko6.pdf} \\ \vspace*{5 pt}
  \hspace*{-6 pt}\includegraphics[width=.3\textwidth]{miko7.pdf} &
  \hspace*{-11 pt}\includegraphics[width=.3\textwidth]{miko8.pdf} &
\end{tabular}
\caption{Mean average precision on the \textbf{Mikolajczyk dataset}. The networks perform better on viewpoint transformations, while SIFT is more robust to strong blur and lighting transformations.}
\label{fig:miko_matching}
\end{figure*}

\subsection{Results}

Fig.~\ref{fig:scatterplots} shows scatter plots that compare the performance of pairs of methods in terms of average precision. Each dot corresponds to an image pair. Points above the diagonal indicate better performance of the first method, and for points below the diagonal the AP of the second method is higher. The scatter plots also give an intuition of the variance in the performance difference.

Fig.~\ref{fig:scatterplots}a,b show that the features from both AlexNet and the Exemplar-CNN outperform SIFT on the Flickr dataset. However, especially for features from AlexNet there are some image pairs, for which SIFT performs clearly better. On the Mikolayczyk dataset, SIFT even outperforms features from AlexNet. We will analyze this in more detail in the next paragraph. Fig.~\ref{fig:scatterplots}c,f compare AlexNet with the Exemplar-CNN-blur and show that the loss function based on surrogate classes is superior to the loss function based on object class labels. In contrast to object classification, class-specific features are not advantageous for descriptor matching. A loss function that focuses on the invariance properties required for descriptor matching yields better results.

In Fig.~\ref{fig:training_matching} and~\ref{fig:miko_matching} we analyze the reason for the clearly inferior performance of AlexNet on some image pairs. The figures show the mean average precision on the various transformations of the datasets using the optimized parameters. On the Flickr dataset AlexNet performs better than SIFT for all transformations except blur, where there is a big drop in performance. Also on the Mikolayczyk dataset, the blur and zoomout transformations are the main reason for SIFT performing better overall. Actually this effect is not surprising. At the lower layers, the networks mostly contain filters that are tuned to certain frequencies. Also the features at higher layers seem to expect a certain sharpness for certain image structures. Consequently, a blurred version of the same image activates very different features. In contrast, SIFT is very robust to image blur as it uses simple finite differences that indicate edges at all frequencies, and the edge strength is normalized out.

The Exemplar-CNN-blur is much less affected by blur since it has learned to be robust to it. To demonstrate the importance of adding blur to the transformations, we also included the Exemplar-CNN which was used for the classification task, i.e., without blur among the transformations. Like AlexNet, it has problems with matching blurred images to the original image.

Computation times per image are shown in Table~\ref{tbl:comptime}. SIFT computation is clearly faster than feature computation by neural networks, but the computation times of the neural networks are not prohibitively large, especially when extracting many descriptors per image using parallel hardware.

\setlength{\tabcolsep}{4pt}
\begin{table}[h!]
%
%
%
\begin{center}
\begin{tabular}{l|c|c|c}
\hline\noalign{\smallskip}
\textbf{Method} & SIFT & AlexNet& Ex-CNN-blur \\
\noalign{\smallskip}
\hline
\noalign{\smallskip}
\textbf{CPU} & $\,4.5\mathrm{ms}$ & $\,28.2\mathrm{ms}$ & $\,103.9\mathrm{ms}$ \\
\noalign{\smallskip}
\hline
\noalign{\smallskip}
\textbf{GPU} & - & $\,0.7\mathrm{ms}$ & $\,1.8\mathrm{ms}$ \\
\noalign{\smallskip}
\hline
\noalign{\smallskip}
\end{tabular}
\caption{Feature computation times for a patch of 113 by 113 pixels.}
\label{tbl:comptime}
\end{center}
\end{table}

\section{Conclusions} \label{sect:discussion}

We have proposed a discriminative objective for unsupervised feature learning by training a CNN without object class labels. The core idea is to generate a set of surrogate labels via data augmentation, where the applied transformations define the invariance properties that are to be learned by the network. The learned features yield a large improvement in classification accuracy compared to features obtained with previous
unsupervised methods. These results strongly indicate that a discriminative objective is superior to objectives previously used for unsupervised feature learning. The unsupervised training procedure also lends itself to learn features for geometric matching tasks. A comparison to the long standing state-of-the-art descriptor for this task, SIFT, revealed a problem when matching neural network features in case of blur. We showed that by adding blur to the set of transformations applied during training, the features obtained with such a network are not much affected by this problem anymore and outperform SIFT on most image pairs. This simple inclusion of blur demonstrates the flexibility of the proposed unsupervised learning strategy. The strong relationship of the approach to data augmentation in supervised settings also emphasizes the value of data augmentation in general and suggests the use of more diverse transformations.

\appendices
\section{Formal analysis} \label{appendix:formal}
\begin{proposition} \label{prop:logexpsum}
The function
$$Z(\bx) = \log \norm{ \exp(\bx)}_1,\; \bx~\in~\oR^n$$
is convex. Moreover, for any $\bx \in \oR^n$ the kernel of its Hessian
matrix $\nabla^2 Z (\bx)$ is given by $\linspan (\bone)$
\end{proposition}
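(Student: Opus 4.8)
The plan is to differentiate $Z$ twice in closed form, identify the Hessian, and then read off both claims from a single inequality: the nonnegativity of a weighted variance. First I would set $S(\bx)=\sum_{k=1}^n e^{x_k}$, so $Z(\bx)=\log S(\bx)$, and introduce $\mathbf{p}:=\softmax(\bx)$, i.e. $p_k = e^{x_k}/S(\bx)$. The key structural fact is that $\mathbf{p}$ is a \emph{strictly positive} probability vector: $p_k>0$ for every $k$ (since $e^{x_k}>0$ for real $x_k$) and $\sum_k p_k=1$. A direct computation gives $\partial_j Z = p_j$ and $\partial_i\partial_j Z = \delta_{ij}p_i - p_i p_j$, hence
$$\grad^2 Z(\bx) = \diag(\mathbf{p}) - \mathbf{p}\,\mathbf{p}^\top .$$

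Next, for an arbitrary $\bu\in\oR^n$ I would evaluate the associated quadratic form,
$$\scal{\bu}{\grad^2 Z(\bx)\,\bu} = \sum_{k} p_k u_k^2 - \Bigl(\sum_k p_k u_k\Bigr)^{2},$$
which, writing $\bar u := \sum_k p_k u_k$, equals $\sum_k p_k\,(u_k-\bar u)^2 \ge 0$. Thus $\grad^2 Z(\bx)\succeq 0$ at every $\bx$, and since $\oR^n$ is convex, a $C^2$ function with everywhere positive semidefinite Hessian is convex; this proves the first claim.

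For the kernel I would use the equality case of the same identity. From the display, $\scal{\bu}{\grad^2 Z(\bx)\,\bu}=0$ forces $p_k(u_k-\bar u)^2=0$ for all $k$, and because every $p_k>0$ this gives $u_k=\bar u$ for all $k$, i.e. $\bu\in\linspan(\bone)$. Conversely $\bone\in\ker\grad^2 Z(\bx)$: indeed $\diag(\mathbf{p})\bone=\mathbf{p}$ and $\mathbf{p}\,\mathbf{p}^\top\bone = \mathbf{p}\,(\mathbf{p}^\top\bone) = \mathbf{p}$ since $\mathbf{p}^\top\bone = 1$. Finally, for any symmetric positive semidefinite $A$ one writes $A=B^\top B$ and observes $Au=0 \iff Bu=0 \iff \scal{u}{Au}=0$, so $\ker A$ coincides with the null set of its quadratic form; applied to $A=\grad^2 Z(\bx)$ together with the two inclusions above, this yields $\ker\grad^2 Z(\bx)=\linspan(\bone)$.

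The only step requiring care — rather than a genuine obstacle — is the equality analysis: strict positivity of \emph{all} $p_k$ is exactly what rules out a larger kernel than $\linspan(\bone)$, so this hypothesis (automatic here) must be invoked explicitly. The two supporting facts used along the way — that an everywhere-PSD Hessian on a convex domain implies convexity, and that the kernel of a symmetric PSD matrix equals the isotropic cone of its quadratic form — are standard and can be stated without proof.
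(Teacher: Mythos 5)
Your proof is correct and follows essentially the same route as the paper: compute the Hessian $\diag(\mathbf{p})-\mathbf{p}\mathbf{p}^\top$ (the paper writes it unnormalized with $\bu=\exp(\bx)$), verify positive semidefiniteness of the quadratic form (your weighted-variance identity is just the normalized form of the paper's Cauchy--Schwarz step), and identify $\linspan(\bone)$ from the equality case. Your only addition is making explicit that for a symmetric PSD matrix the kernel coincides with the zero set of the quadratic form, a point the paper uses implicitly; this is a welcome but minor refinement, not a different argument.
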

\begin{proof}
 Since
 \begin{equation}
   Z(\bx) = \log \norm{ \exp(\bx)}_1 =  \log \sum_{i=1}^n \exp(x_i)
 \end{equation}
 we need to prove the convexity of the log-sum-exp function.
 The Hessian $\nabla^2$ of this function is given as
 \begin{equation}
   \nabla^2 Z (\bx) = \frac{1}{(\bone^T \bu)^2} ((\bone^T \bu)\; \diag(\bu) - \bu \bu^T ),
 \label{eq:hessian}
 \end{equation}
 with $\bu = exp(\bx)$ and $\bone \in \oR^n$ being a vector of ones. To show the convexity we must prove  that  $\bz^T \nabla^2 Z(\bx)  \bz \geq 0$ for all $\bx, \bz \in \oR^n$. From
 \eqref{eq:hessian} we get
 \begin{multline}
  \bz^T\, \nabla^2 Z(\bx)\, \bz = \frac{1}{(\bone^T \bu)^2} ((\bone^T \bu)\; \bz^T \diag(\bu)\, \bz - \bz^T \bu \bu^T \bz ) \\
                 = \frac{(\sum_{k=1}^n u_k z_k^2) (\sum_{k=1}^n u_k) - (\sum_{k=1}^n u_k z_k)^2}{(\sum_{k=1}^n u_k)^2} \geq 0
 \label{eq:hessian_ineq}
 \end{multline}
since $(\sum_{k=1}^n u_k)^2 \geq 0$  and  $(\sum_{k=1}^n z_k u_k)^2 \leq (\sum_{k=1}^n u_k z_k^2)
(\sum_{k=1}^n u_k)$ due to the Cauchy-Schwarz inequality.

Inequality~\eqref{eq:hessian_ineq} only turns to equality if
\begin{equation}
\sqrt{u_k} z_k = c \sqrt{u_k},
\end{equation}
where the constant $c$ does not depend on $k$. This immediately gives $\bz = c \bone$, which proves the second statement of the proposition.
\end{proof}

\begin{proposition} \label{prop:jensens}
Let $\balpha \in \alphaset$ be a random vector with values in a bounded set $\alphaset \subset \oR^k$. Let $\bx(\cdot)\colon\, \alphaset \to \oR^n$ be a continuous function. Then inequality~\eqref{eq:jensens}
\begin{equation*}
 \me_\balpha \left[ \log \norm{\exp (\bx(\balpha))}_1 \right] - \log
 \norm{\exp (\me_\balpha [\bx(\balpha)])}_1 \geq 0
\end{equation*}
holds and only turns to equality if for all $\balpha_1, \balpha_2 \in \alphaset$: $(\bx(\balpha_1) - \bx(\balpha_2)) \in \linspan (\bone)$ .
\end{proposition}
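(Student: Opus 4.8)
The plan is to deduce the inequality directly from Jensen's inequality applied to the convex function $Z(\bx) = \log\norm{\exp(\bx)}_1$ (convexity being Proposition~\ref{prop:logexpsum}), and then to pin down the equality case using the precise description $\ker \nabla^2 Z = \linspan (\bone)$ from the same proposition. Throughout I will assume $\alphaset$ is bounded and, if needed for integrability, that its closure is compact so that $\bx(\cdot)$ is bounded and all expectations are finite; I also read the equality clause as referring to parameters in the support of $\balpha$, which I take to be all of $\alphaset$. With these conventions, the inequality $\me_\balpha[Z(\bx(\balpha))] \geq Z(\me_\balpha[\bx(\balpha)])$ is immediate: $Z$ is convex by Proposition~\ref{prop:logexpsum}, $\bx(\cdot)$ is continuous, and this is exactly~\eqref{eq:jensens}. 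For the ``if'' direction of the equality clause, suppose $\bx(\balpha_1)-\bx(\balpha_2)\in\linspan (\bone)$ for all $\balpha_1,\balpha_2$; fix $\balpha_0$ and write $\bx(\balpha) = \bx(\balpha_0) + t(\balpha)\bone$ for a scalar function $t$. Using the identity $Z(\bx+t\bone) = Z(\bx)+t$ (immediate from $\log\sum_i e^{x_i+t} = t + \log\sum_i e^{x_i}$), both $\me_\balpha[Z(\bx(\balpha))]$ and $Z(\me_\balpha[\bx(\balpha)])$ equal $Z(\bx(\balpha_0)) + \me_\balpha[t(\balpha)]$, so~\eqref{eq:jensens} holds with equality.

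For the ``only if'' direction, set $\bar\bx := \me_\balpha[\bx(\balpha)]$. Since $Z$ is $C^\infty$ and convex, the first-order inequality gives $Z(\bx(\balpha)) \geq Z(\bar\bx) + \scal{\grad Z(\bar\bx)}{\bx(\balpha)-\bar\bx}$ for every $\balpha$, and taking expectations the right-hand side integrates to $Z(\bar\bx)$ because $\me_\balpha[\bx(\balpha)-\bar\bx]=0$. If~\eqref{eq:jensens} is an equality, then the expectation of the nonnegative continuous function $\balpha\mapsto Z(\bx(\balpha)) - Z(\bar\bx) - \scal{\grad Z(\bar\bx)}{\bx(\balpha)-\bar\bx}$ is zero, hence it vanishes identically on $\alphaset$. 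Fixing $\balpha$ and letting $\phi(s) := Z(\bar\bx + s(\bx(\balpha)-\bar\bx))$ on $[0,1]$, convexity of $\phi$ together with $\phi(1)=\phi(0)+\phi'(0)$ forces $\phi$ to be affine on $[0,1]$ (it lies above its tangent at $0$ and below the chord, and the two bounds coincide), so $\phi''\equiv 0$, i.e. $(\bx(\balpha)-\bar\bx)^{\top}\,\nabla^2 Z(\bar\bx)\,(\bx(\balpha)-\bar\bx) = 0$. By the Hessian formula in Proposition~\ref{prop:logexpsum} this quadratic form is positive semidefinite with kernel exactly $\linspan (\bone)$, so $\bx(\balpha)-\bar\bx\in\linspan (\bone)$; since this holds for all $\balpha$, we conclude $\bx(\balpha_1)-\bx(\balpha_2) = (\bx(\balpha_1)-\bar\bx) - (\bx(\balpha_2)-\bar\bx) \in\linspan (\bone)$ for all $\balpha_1,\balpha_2$.

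The step I expect to require the most care is not any algebraic manipulation but the passage from ``the expectation of a nonnegative continuous function is zero'' to ``the function vanishes everywhere on $\alphaset$'': this uses the (mild, implicitly assumed) hypothesis that $\balpha$ has full support on $\alphaset$, which is why I want to fix the boundedness/compactness and support conventions up front. An alternative to the segment argument is to decompose $\oR^n = \linspan (\bone) \oplus W$ with $W = \{\bz : \bone^{\top}\bz = 0\}$ and write $Z(\bx) = Z|_W(\bx^\perp) + \tfrac1n\bone^{\top}\bx$, where $\bx^\perp$ is the $W$-component; by Proposition~\ref{prop:logexpsum} the restriction $\nabla^2 Z|_W$ is positive definite, so $Z|_W$ is strictly convex and the difference in~\eqref{eq:jensens} reduces to $\me_\balpha[Z|_W(\bx(\balpha)^\perp)] - Z|_W(\me_\balpha[\bx(\balpha)^\perp])$, for which the classical strict-convexity equality case of Jensen gives that equality holds iff $\bx(\balpha)^\perp$ is (a.s.) constant, i.e. iff $\bx(\balpha_1)-\bx(\balpha_2)\in\linspan (\bone)$. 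This route still relies on exactly the same support hypothesis.
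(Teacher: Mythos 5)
Your proof is correct and follows essentially the same route as the paper: Jensen's inequality for the convex function $Z(\bx)=\log\norm{\exp(\bx)}_1$ gives the inequality, and the equality case is reduced to the vanishing of the quadratic form of $\nabla^2 Z$, whose kernel $\linspan(\bone)$ from Proposition~\ref{prop:logexpsum} yields the conclusion. The only differences are that you re-derive the Jensen equality criterion explicitly (supporting hyperplane at the mean plus the tangent/chord argument, with the Hessian evaluated at $\me_\balpha[\bx(\balpha)]$ rather than at $\bx(\balpha_1)$), make explicit the full-support assumption that the paper leaves implicit, and additionally verify the converse direction via $Z(\bx+t\bone)=Z(\bx)+t$, none of which changes the substance of the argument.
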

\begin{proof}
Inequality~\eqref{eq:jensens} immediately follows from convexity of the function $\log \norm{\exp (\cdot)}_1$ and Jensen's inequality.

Jensen's inequality only turns to equality if the function it is applied to is affine-linear on the convex hull of the integration region. In particular this implies
\begin{equation}
 (\bx(\balpha_1) - \bx(\balpha_2))^T\; \nabla^2 Z (\bx(\balpha_1))\; (\bx(\balpha_1) - \bx(\balpha_2)) = 0
\end{equation}
for all $\balpha_1, \balpha_2 \in \alphaset$. The second statement of Proposition~\ref{prop:logexpsum} thus immediately gives $\bx(\balpha_1) - \bx(\balpha_2) = c \bone$, Q.E.D.

\end{proof}

\section{Method details} 

We describe here in detail the network architectures we evaluated and explain the network training procedure.
We also provide details of the clustering process we used to improve Exemplar-CNN.

\subsection{Network Architecture} \label{appendix:network_architecture}
We tested various network architectures in combination with our
training procedure. They are
coded as follows: NcF stands for a convolutional layer with $N$ filters
of size $F \times F$ pixels, Nf stands for a fully connected layer with
$N$ units. For example, 64c5-64c5-128f denotes a network with
two convolutional layers containing 64 filters spanning $5 \times 5$
pixels each followed by a fully connected layer with $128$
units. The last specified layer is always succeeded by a softmax
layer, which serves as the network output. We applied
$2 \times 2$ max-pooling to the outputs of the first and second
convolutional layers.

As stated in the paper we used a 64c5-64c5-128f architecture in our
experiments to evaluate
the influence of different components of the augmentation procedure  (we refer to this architecture as the 'small' network).
A large network, coded as 64c5-128c5-256c5-512f, was then used to
achieve better classification performance.

All considered networks contained rectified linear units in each layer
but the softmax layer. Dropout was applied to the fully connected
layer.

\subsection{Training the Networks} \label{appendix:training_details}

We adopted the common practice of training the network with stochastic gradient descent with a fixed momentum of $0.9$.
We started with a learning rate of $0.01$ and gradually decreased the
learning rate during training.  That is, we trained until there was no
improvement in validation error, then decreased the learning rate by a
factor of $3$, and repeated this procedure until convergence. Training times on a Titan GPU were roughly $1.5$ days for the 64c5-64c5-128f network, $4$ days for the 64c5-128c5-256c5-512f network and $9$ days for the 92c5-256c5-512c5-1024f network.

\subsection{Clustering} \label{app:clustering}
To judge about similarity of the clusters we use the following simple heuristics. The method of~\cite{Singh_ECCV2012} gives us a set of linear SVMs. We apply these SVMs to the whole STL-10 unlabeled dataset and select $N_{percluster}=10$ top firing images per SVM, which gives us a set of initial clusters. We then compute the overlap (number of common images) of each pair of these clusters. We set two thresholds $T_{merge}=3$ and $T_{discard}=1$ and perform a greedy procedure: starting from the most overlapping pair of clusters, we merge the clusters if their overlap exceeds $T_{merge}$ and discard one of the clusters if the overlap is between $T_{discard}$ and $T_{merge}$.

\section{Details of computing the measure of invariance} \label{appendix:invariance_details}

We now explain in detail and motivate the computation of the normalized Euclidean distance used as a measure of invariance in the paper.

First we compute feature vectors of all image patches and their transformed versions. Then we normalize each feature vector to unit Euclidean norm and compute the Euclidean distances between each original patch and all of its transformed versions. For each transformation and magnitude we average these distances over all patches. Finally, we divide the resulting curves by their maximal values (typically it is the value for the maximum magnitude of the transformation).

The normalizations are performed to compensate for possibly different scales of different features. Normalizing feature vectors to unit length ensures that the values are in the same range for different features. The final normalization of the curves by the maximal value allows to compensate for different variation of different features: as an extreme, a constant feature would be considered perfectly invariant without this normalization, which is certainly not desirable.

The resulting curves show how quickly the feature representation changes when an image is transformed more and more. A representation for which the curve steeply goes up and then remains constant cannot be considered invariant to the transformation: the feature vector of the transformed patch becomes completely uncorrelated with the original feature vector even for small magnitudes of the transformation. On the other hand, if the curve grows gradually, this indicates that the feature representation changes slowly when the transformation is applied, meaning invariance or, rather, covariance of the representation.

\ifCLASSOPTIONcompsoc
  \section*{Acknowledgments}
\else
  \section*{Acknowledgment}
\fi

AD, PF, and TB acknowledge funding by the ERC Starting Grant VideoLearn (279401). JTS and MR are supported by the BrainLinks-BrainTools Cluster of Excellence funded by the German Research Foundation (EXC 1086). PF acknowledges a fellowship by the Deutsche Telekom Stifung.

\ifCLASSOPTIONcaptionsoff
  \newpage
\fi



%
{\small
\bibliographystyle{IEEEtran}
\bibliography{./dosovits_new,./matchingbib}
}






\end{document}